%% file: main.tex
\documentclass[twocolumn]{scholar7}

\usepackage{amsmath,amssymb,amsthm,mathtools}

\usepackage[dvipsnames]{xcolor} 
\usepackage{tikz}
\usetikzlibrary{shapes.geometric, arrows.meta, positioning, fit, backgrounds, calc, shadows.blur, patterns}
\usepackage{pgfplots}
\usepgfplotslibrary{fillbetween}
\pgfplotsset{compat=1.18}
\usepackage{pgffor}

\usepackage{bm}
\usepackage{bbm}

\usepackage{fontawesome}

\definecolor{darkblue}{rgb}{0.0, 0.0, 0.55}

\usepackage{hyperref}
\hypersetup{
    colorlinks=true,
    linkcolor=darkblue,
    citecolor=darkblue,
    urlcolor=darkblue,
}

\newtheorem{theorem}{Theorem}
\newtheorem{lemma}[theorem]{Lemma}
\newtheorem{proposition}[theorem]{Proposition}
\newtheorem{corollary}[theorem]{Corollary}
\newtheorem{definition}[theorem]{Definition}
\newtheorem{assumption}[theorem]{Assumption}
\newtheorem{remark}[theorem]{Remark}

\newcommand{\cB}{\mathcal{B}}

\newcommand{\cT}{\mathcal{T}}
\newcommand{\cU}{\mathcal{U}}
\newcommand{\cV}{\mathcal{V}}

\newcommand{\cZ}{\mathcal{Z}}
\newcommand{\cX}{\mathcal{X}}
\newcommand{\cY}{\mathcal{Y}}

\newcommand{\E}{\mathbb{E}}
\newcommand{\Prob}{\mathbb{P}}
\newcommand{\KL}{\mathrm{KL}}
\newcommand{\I}{\mathrm{I}}
\newcommand{\Hh}{\mathrm{H}}

\newcommand{\abs}[1]{\left|#1\right|}
\newcommand{\bracks}[1]{\left[#1\right]}

\newcommand{\idx}[1]{\bracks{#1}} 

\usepackage{enumitem}

\title{\textsc{Epistemic Throughput}: Fundamental Limits of
Attention-Constrained Inference}

\author[1,2]{Lei You}

\affiliation[1]{Technical University of Denmark}
\affiliation[2]{CSPaper @ Scholar7}

\correspondence{\email{leiyo@dtu.dk}, \email{lei@scholar7.com}}
\date{February 2026}

\abstract{%
Recent generative and tool-using AI systems can surface a large volume of candidates at low marginal cost, yet only a small fraction can be checked carefully. This creates a decoder-side bottleneck: downstream decision-makers must form reliable posteriors from many public records under scarce attention. We formalize this regime via Attention-Constrained Inference (ACI), in which a cheap screening stage processes $K$ records and an expensive verification stage can follow up on at most $B$ of them. Under Bayes log-loss, we study the maximum achievable reduction in posterior uncertainty per window, which we call \emph{epistemic throughput}.
Our main result is a ``JaKoB'' scaling law showing that epistemic throughput has a baseline term that grows linearly with verification and prevalence, and an additional \emph{information-leverage} term that scales as $\sqrt{JKB}$, where $J$ summarizes screening quality. Thus, expanding cheap screening can nonlinearly amplify scarce verification, even when informative records are rare. We further show that this scaling is tight in a weak-screening limit, and that in the sparse-verification regime ($B \ll K$), substantial leverage requires heavy-tailed score distributions; for light-tailed scores the amplification is only logarithmic.
}

\begin{document}

\maketitle

\input{sections/introduction}
\input{sections/related_work}
\input{sections/system_model}
\input{sections/theory}
\input{sections/conclusion_and_discussion}


\input{sections/appendices}

\bibliographystyle{unsrtnat}
\bibliography{references}

\end{document}

%% file: sections/introduction.tex
\section{Introduction}
\label{sec:intro}

\subsection{Background and key problem}

Generative AI has fundamentally inverted the economics of information: the cost of generating plausible content has dropped to zero~\cite{galdin2025making}, while the cost of verifying its truth remains distinctively high.
This creates a crisis in \emph{high-stakes} domains, such as scientific discovery, software supply chains, and financial intelligence, where the system's objective is not engagement but \emph{inference}.
Unlike casual content consumption, these domains tolerate near-zero error; their goal is to reconstruct a binary latent state (truth, safety, or validity) from a noisy observable artifact.
Historically, trust in these systems relied on \emph{costly signaling}~\cite{spence1973}, where generation cost served as a proxy for quality.
LLMs have shattered this separating equilibrium, flooding the channel with cheap artifacts that mimic authority but lack the underlying guarantee of truth.

This disruption is catastrophic because verification does not scale.
There is no single ``editor-in-chief'' (or audit pipeline) with enough bandwidth to vet the global stream.
In practice, verification is carried out by downstream decision-makers. Sometimes there is only one, but often there are many, and each operates under scarce attention budgets.
We therefore face an \emph{attention-constrained inference} (ACI) problem.
Here, ``attention'' means a budget of costly inspection and verification steps, not the self-attention mechanism in Transformers.
Our focus is the maximum achievable information gain under log-loss per window, which we call \emph{epistemic throughput}.
To cope with informational overload, agents naturally adopt a \textbf{two-stage inference strategy}, consistent with rational inattention~\cite{sims2003rational} and visual search~\cite{treisman1980feature}, to maximize information gain about the latent ground truth under attention constraints:
\begin{enumerate}
\item \textbf{Screening ($K$)} (broad attention): inspect $K$ records using a low-cost, high-throughput heuristic (e.g., scanning an abstract). Screening is scalable but inherently noisy.
\item \textbf{Verification ($B$)} (deep attention): select a subset of $B$ candidates from the screened pool for high-fidelity auditing (e.g., reproducing an experiment). Verification is accurate but scarce.
\end{enumerate}

This abstraction matches modern retrieval augmented generation (RAG) workflows.
Screening corresponds to retrieval and lightweight scoring over a large pool, while verification corresponds to consuming and integrating a small set of sources well enough to justify a low-uncertainty output.

The central operational challenge therefore shifts from channel capacity to \emph{attention constraints}: how to use a noisy screen to aggressively filter the haystack so that the scarce verification budget is allocated only to the most promising signals.

\subsection{Main contributions}

\textbf{JaKoB scaling law.}
We characterize the best achievable \emph{information gain} under log-loss in the ACI haystack regime.
For a signal with prevalence $p$ and screening quality $J$, we prove a fundamental scaling law (formalized as a converse in Theorem~\ref{thm:ver-tradeoff} and shown achievable in a weak-screening limit in Theorem~\ref{thm:score-achievability}):
\begin{equation}
\label{eq:intro-jakob}
\mathrm{Gain}
\;\approx\;
I_{\mathrm{ver}}\,B\Bigg(
p
+
c\,\sqrt{\tfrac{J K}{B}}
\Bigg),
\qquad
c\leq \sqrt{\tfrac{\ln 2}{2}}.
\end{equation}

Here, $I_{\mathrm{ver}}$ is the information contributed per verified informative record (defined precisely in Section~\ref{sec:theory}).
This factorization can be read as the verification capacity $I_{\mathrm{ver}}B$ scaled by an \emph{enhanced hit rate}.
The square-root term reveals a tight and surprisingly universal mechanism:
even weak screening ($J$ small) can nonlinearly amplify a scarce verification budget, provided that screening can oversample at rate $K/B$.

\begin{itemize}
  \item \emph{Base precision ($p$):} the prevalence (sparsity) of informative records; this is the hit rate under random verification.
  \item \emph{Screening boost} ($c\sqrt{JK/B}$): the precision gain enabled by oversampling.
  By screening a large volume ($K$) to fill a small verification budget ($B$), the system can ``cherry-pick'' and dramatically enrich the verified subset.
\end{itemize}

Fig.~\ref{fig:JKB-scaling} offers a geometric view of the JaKoB tradeoff.
The gray \emph{Random} line is the gain from verifying blindly (hit rate $p$), and the gray \emph{Oracle} line is the ceiling under perfect screening (hit rate $1$).
The colored curves show the JaKoB prediction for different values of $JK$, interpolating between these extremes.
The background shading visualizes the yield per verified item.

\input{figures/JKB_scaling_law}

\textbf{Constructive achievability.}
Crucially, we show that the JaKoB limit is not merely a bound: it is efficiently achievable.
A simple \emph{score-based verification} policy (inspect $K$ records, rank them by screening scores, and verify the top-$B$) achieves the scaling in a practical \emph{weak-screening regime} (modeled via a local logistic regression).

\textbf{Escaping the Gaussian trap.}
In the sparse verification regime ($B\ll K$), the benefit of massive screening is controlled by the \emph{upper-tail behavior} of the screening scores.
For light-tailed scores (e.g., Gaussian), leverage grows only logarithmically; for heavy-tailed scores (e.g., Pareto), leverage can be polynomial (Propositions~\ref{prop:gaussian-tail}--\ref{prop:pareto-tail}).
To sharpen intuition, Appendix~\ref{app:tight-benchmark} also gives a fully solved benchmark with an exact finite-block characterization of the risk-resource boundary.

\subsection{Paper organization}

Section~\ref{sec:related} discusses related work.
Section~\ref{sec:model} presents the ACI model and the haystack specialization.
Section~\ref{sec:theory} states the main converse, achievability, and scaling results. Section~\ref{sec:conclusion} concludes the paper.
Appendix~\ref{sec:proofs} contains proofs, and Appendix~\ref{app:tight-benchmark} provides the fully solved benchmark.

%% file: figures/JKB_scaling_law.tex
\begin{figure}[t]
  \centering
  \begin{tikzpicture}
    \definecolor{acliBlue}{RGB}{31,119,180}
    \definecolor{acliPurple}{RGB}{155,89,182}
    \definecolor{acliGreen}{RGB}{44,160,44}
    \definecolor{acliGray}{RGB}{120,120,120}
    \definecolor{acliRed}{RGB}{214, 39, 40}

    \definecolor{effLowBlue}{RGB}{116, 173, 209} 
    \definecolor{effMid}{RGB}{250, 250, 245}
    \definecolor{effHighRed}{RGB}{244, 109, 67}  

    \begin{axis}[
        width=0.96\linewidth,
        height=0.90\linewidth,
        xmin=0, xmax=200,
        ymin=0, ymax=210,
        xlabel={Verification budget $B$},
        ylabel={Normalized gain $\mathrm{Gain}/I_{\mathrm{ver}}$},
        tick label style={font=\small},
        label style={font=\small},
        axis line style={line width=0.75pt},
        tick style={line width=0.75pt},
        grid=none,
        major grid style={line width=0.35pt, draw=black!7},
        minor tick num=1,
        axis on top,          
        view={0}{90},         
        title = {\textbf{JaKoB Scaling Law}}
      ]

      \pgfmathsetmacro{\baseP}{0.05} 

      \addplot3[
          surf,
          shader=interp,
          draw=none,
          domain=0.1:200,       
          domain y=\baseP:1,    
          samples=150,           
          samples y=40,         
          point meta=y/x,
          colormap={effmap_smooth}{
            color=(effLowBlue!60) 
            color=(effMid)       
            color=(effHighRed!60)
          },
      ] ({x}, {x*y}, {0});


      
      \addplot[densely dotted, very thick, color=acliGray] coordinates {(0,0) (200,200)};
      \node[font=\small, text=acliGray, anchor=west, rotate=43] at (axis cs:10,20) {Oracle: $B,~(\gamma=1)$};

      \addplot[dashed, very thick, color=acliGray] coordinates {(0,0) (200,{200*\baseP})};
      \node[font=\small, text=acliGray, anchor=west, rotate=5] at (axis cs:105,8.5) {Random: $Bp,~(\gamma=p)$};

      \pgfmathsetmacro{\constC}{0.45}    

      \pgfmathsetmacro{\JKtwo}{10}
      \pgfmathsetmacro{\BstarA}{(\constC * sqrt(\JKtwo) / (1-\baseP))^2}
      
      \addplot[solid, very thick, color=acliPurple, domain=0:\BstarA] {x};
      \addplot[solid, very thick, color=acliPurple, domain=\BstarA:200, samples=60]
        {\baseP*x + \constC*sqrt(\JKtwo*x)};
        
      \node[font=\small, text=acliPurple, anchor=west, rotate=7] at (axis cs:140,30) {$J\!\cdot\!K=10$};
      \addplot[acliPurple, densely dashed, opacity=0.55, line width=1.0pt] coordinates {(\BstarA,0) (\BstarA,\BstarA)};
      \addplot[only marks, mark=*, mark size=1.6pt, color=acliPurple] coordinates {(\BstarA,\BstarA)};

      \pgfmathsetmacro{\JKmain}{100}
      \pgfmathsetmacro{\BstarB}{(\constC * sqrt(\JKmain) / (1-\baseP))^2}

      \addplot[solid, very thick, color=acliBlue, domain=0:\BstarB] {x};
      \addplot[solid, very thick, color=acliBlue, domain=\BstarB:200, samples=60]
        {\baseP*x + \constC*sqrt(\JKmain*x)};
        
      \node[font=\small, text=acliBlue, anchor=west, rotate=7] at (axis cs:140,70) {$J\!\cdot\!K=100$};
      \addplot[acliBlue, densely dashed, opacity=0.55, line width=1.0pt] coordinates {(\BstarB,0) (\BstarB,\BstarB)};
      \addplot[only marks, mark=*, mark size=1.6pt, color=acliBlue] coordinates {(\BstarB,\BstarB)};

      \pgfmathsetmacro{\JKten}{200}
      \pgfmathsetmacro{\BstarC}{(\constC * sqrt(\JKten) / (1-\baseP))^2}

      \addplot[solid, very thick, color=acliGreen, domain=0:\BstarC] {x};
      \addplot[solid, very thick, color=acliGreen, domain=\BstarC:200, samples=60]
        {\baseP*x + \constC*sqrt(\JKten*x)};
        
      \node[font=\small, text=acliGreen, anchor=west, rotate=7] at (axis cs:140,96) {$J\!\cdot\!K=200$};
      \addplot[acliGreen, densely dashed, opacity=0.55, line width=1.0pt] coordinates {(\BstarC,0) (\BstarC,\BstarC)};
      \addplot[only marks, mark=*, mark size=1.6pt, color=acliGreen] coordinates {(\BstarC,\BstarC)};

      \pgfmathsetmacro{\JKten}{400}
      \pgfmathsetmacro{\BstarC}{(\constC * sqrt(\JKten) / (1-\baseP))^2}

      \addplot[solid, very thick, color=acliRed, domain=0:\BstarC] {x};
      \addplot[solid, very thick, color=acliRed, domain=\BstarC:200, samples=60]
        {\baseP*x + \constC*sqrt(\JKten*x)};
        
      \node[font=\small, text=acliRed, anchor=west, rotate=7] at (axis cs:140,133) {$J\!\cdot\!K=400$};
      \addplot[acliRed, densely dashed, opacity=0.55, line width=1.0pt] coordinates {(\BstarC,0) (\BstarC,\BstarC)};
      \addplot[only marks, mark=*, mark size=1.6pt, color=acliRed] coordinates {(\BstarC,\BstarC)};

      \node[
        anchor=north west, 
        align=left, font=\small, inner sep=5pt
      ] at (axis cs:4,206)
      {    $\displaystyle \frac{\mathrm{Gain}}{I_{\mathrm{ver}}}\approx
           \min\!\left\{B,\;B\cdot \left(p+c\sqrt{\frac{JK}{B}}\right)\right\}$
        };

    \node[font=\small, anchor=south east] at (axis cs:126,125)
      {Yield per unit $B$: $\displaystyle \gamma=\frac{\mathrm{Gain}}{B I_{\mathrm{ver}}}$};
    \node[font=\footnotesize, anchor=south east] at (axis cs:110,110)
      {\textcolor{effLowBlue!80!black}{\textbf{Low $\gamma=p$}} $\rightarrow$ \textcolor{effHighRed!80!black}{\textbf{High $\gamma=1$}}};

    \end{axis}
  \end{tikzpicture}
\caption{\textbf{JaKoB scaling and information leverage.}
Blind verification yields the baseline $Bp$, while perfect screening caps at the oracle ceiling $B$.
With screening quality $J$ applied to $K$ candidates, screening contributes an additional $\Theta(\sqrt{JKB})$ gain (clipped at $B$); larger $JK$ shifts the curve upward, so the same information gain can be reached with a smaller scarce budget $B$ and a higher per-verification yield $\gamma=\mathrm{Gain}/(B I_{\mathrm{ver}})$.}
  \label{fig:JKB-scaling}
\end{figure}
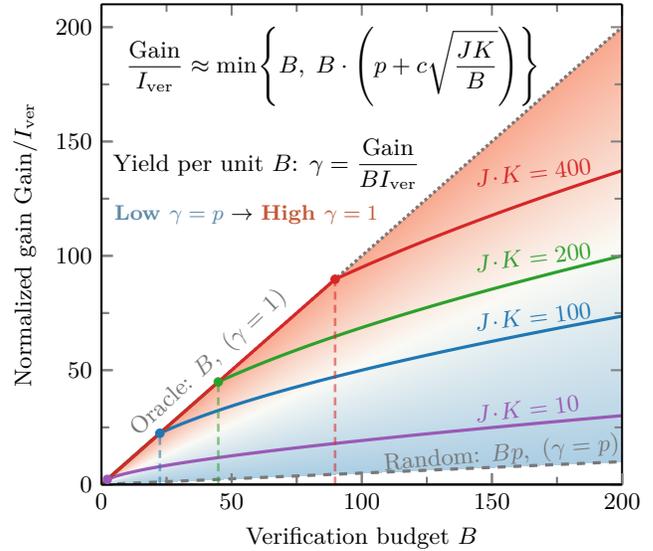

%% file: sections/related_work.tex
\section{Related Work}
\label{sec:related}

Our model sits at the intersection of information theory, decision theory, and resource-constrained inference.
Classical information theory focuses on \emph{encoder-side} constraints, including rate-limited descriptions and noisy channels, as in \citep{shannon1948} and standard texts \citep{cover2006elements}.
ACI instead isolates a complementary regime in which \emph{records are abundant and public}, but \emph{downstream decision-makers are attention-limited}:
only a small subset can be inspected and an even smaller subset can be verified.
Below we organize the most relevant threads and clarify where ACI differs.

\subsection{Decoder-side constraints and information acquisition}

A large literature studies how resource constraints at the receiver affect communication and inference.
In information theory, one line constrains \emph{decoding resources} such as reliability, memory, or energy, and asks how these constraints modify fundamental limits; see, e.g., the thesis of \citep{varshney2010unreliable} and related work on faulty or resource-limited iterative decoding.
A second, closely related line treats \emph{information acquisition} as an \emph{action} under a cost constraint:
the system chooses whether/how to obtain side information.
This is formalized in ``source coding with actions'' and ``vending machine'' models, where actions control the quality or availability of side information \citep{permuter2011vending}.
ACI shares the acquisition viewpoint but differs operationally:
our two-stage structure separates \emph{cheap screening} (inspection) from \emph{expensive high-fidelity observation} (verification),
and the key coupling parameter is the screening quality (quantified by the mutual information between the screening signal and the latent truth), which governs how much screening can enrich the verified set.

Economics provides a complementary and influential formalization of limited attention via \emph{rational inattention},
where information-processing costs are measured through Shannon mutual information \citep{sims2003rational}.
This perspective has been developed into tractable choice models \citep{matejka2015rational,caplin2015revealed} and surveyed in \citep{mackowiak2023review}.
Our model is aligned with the mutual-information cost philosophy, but with an engineering emphasis:
we ask how a community-level inspection budget $K$ can amplify a verification budget $B$ in terms of Bayes log-loss.

Finally, our results relate to classic \emph{sequential design} and \emph{active learning} themes
(e.g., choosing what to measure next) dating back to \citep{chernoff1959}.
ACI is deliberately \emph{one-shot} at the window level:
screening produces a batch of scores and verification selects a subset (often top-$B$),
making the analysis closer to selection under budget than to fully adaptive querying.

\subsection{Decentralized inference and social learning}

Distributed detection and estimation traditionally study networks of sensors that send messages to a fusion center or cooperate under communication constraints.
Foundational results include decentralized detection with many sensors \citep{tsitsiklis1988decentralized} and the detection/data-fusion synthesis in \citep{varshney1997distributed}.
Modern surveys emphasize resource constraints and network structure in distributed inference \citep{veeravalli2012distributed}.
These models typically assume a \emph{designed sensing architecture} and a \emph{communication protocol}.
ACI takes a different stance:
records already exist as public artifacts, and the bottleneck lies in what decoders can afford to \emph{inspect} and \emph{verify}.

Although ACI does not impose a multi-agent coordination constraint and applies even in the single-decoder case, its motivation of many downstream agents reusing public artifacts connects to social learning and belief aggregation.
Early models of consensus and belief pooling include \citep{degroot1974}.
In economics, herd behavior and informational cascades \citep{banerjee1992,bikhchandani1992} show how public actions can dominate private information.
More recent networked-learning work characterizes when agents learn from neighbors without a central coordinator \citep{jadbabaie2012nonbayesian}.
ACI is compatible with these perspectives but focuses on a different bottleneck:
we model the \emph{production of verifiable public artifacts} under shared budgets, and evaluate the best achievable population-level log-loss.

\subsection{Log-loss as an operational metric}

Log-loss is central in information theory because it turns Bayes risk into conditional entropy.
Beyond its role in coding under logarithmic loss \citep{courtade2014multiterminal},
log-loss is the canonical scoring rule for probabilistic prediction and is deeply tied to universal coding and Bayesian mixture methods \citep{clarke1990bayes,merhav1998universal,rissanen1984universal}.
Recent work continues to study prediction under log-loss with side information and regret characterizations \citep{bhatt2021sequential}.
In ACI, log-loss provides a clean risk metric that naturally aligns with the objective of maximizing information gain, making the inspection and verification interaction analytically transparent.

\subsection{Haystacks, rare signals, and extreme-value selection}

The haystack regime, where there are many candidates but only a small fraction of items are informative, is shared with statistics on sparse or rare signal detection and multiple testing.
Work on detecting sparse mixtures and rare effects highlights sharp phase transitions and the need to aggregate weak evidence \citep{donoho2004higher,ingster2003testing}.
Verification as ``testing a few'' among many hypotheses also relates to multiple-comparisons ideas such as false discovery rate control \citep{benjamini1995fdr}.
In settings where tests can be pooled, group testing \citep{dorfman1943group} provides another classic abstraction of scarce testing resources.
ACI differs in that verification reveals high-fidelity information \emph{only when the record is informative},
so the central object becomes \emph{enrichment} of the verified set from cheap screening.

A complementary, information-theoretic view of selection appears in adaptive data analysis.
Russo and Zou \citep{russo2019much} bound the bias induced by data-dependent exploration by the mutual information between the selected analysis and the data, including explicit treatments of filtering and rank selection.
Although their objective is to control post-selection bias rather than maximize information gain, the underlying information-usage principle is closely related to our enrichment converse: limited screening information constrains how far a top-$B$ rule can tilt the verified set away from the baseline prevalence.

Technically, the top-$B$ selection step makes order statistics and extreme-value theory unavoidable \citep{david2003order,embrechts1997modelling}.
Our tail-leverage results formalize exactly when expanding inspection $K$ is valuable: it is valuable only to the extent that the score distribution has exploitable upper-tail mass.

\subsection{Verification at scale in modern information ecosystems}

Finally, ACI is motivated by empirical systems where verification is scarce relative to content generation.
In NLP and fact checking, large-scale claim verification datasets and benchmarks (e.g., FEVER) formalize the pipeline of retrieving candidates and verifying a small set \citep{thorne2018fever}.
In the foundation-model era, community reports and analyses emphasize the gap between cheap content generation and expensive auditing \citep{bommasani2021foundation,bender2021parrots,openai2023gpt4}.
ACI is not a proposal for a new verifier; rather, it provides \emph{limits} that constrain \emph{any} such pipeline under inspection/verification budgets.

A common mitigation is to ground generation in external evidence through RAG~\cite{lewis2020rag} and related retrieval-based pipelines.
Retrieval can cheaply surface many plausible sources, but only a few can be read, cross-checked, and trusted within a limited context window or compute budget.
The bottleneck remains verification attention.

The same bottleneck appears in tool-using agents that can run many low-cost queries or heuristics but can only follow up deeply on a small set of results. Representative instances include web-browsing QA agents such as WebGPT \cite{nakano2021webgpt},
ReAct-style agents that interleave reasoning with tool calls \cite{yao2023react},
Toolformer-style self-supervised tool use \cite{schick2023toolformer}
and modular tool-routing architectures such as MRKL systems \cite{karpas2022mrkl}. A related prompting approach is self-ask, which can be paired with a search engine to answer follow-up questions \cite{press2023measuring}. Another relevant application is CSPaper Review that selects the most justified critiques from a pool of concurrent agent-generated reviews \cite{cao2025cspaper}.

%% file: sections/system_model.tex
\section{System Model}
\label{sec:model}

This section introduces our one-window model for attention-constrained inference (ACI) and the performance metric under log-loss.
A population of producers releases a large collection of records, while a population of decoders has limited capacity to inspect them.
Later in the section we specialize to a canonical \emph{haystack} regime that captures a two-stage pipeline, cheap screening followed by expensive verification, which is the setting analyzed in Section~\ref{sec:theory}.
All random variables are defined on a common probability space.

Fig.~\ref{fig:system_model} previews the haystack specialization.
In one window, the system can screen many candidates at low cost and then follow up deeply on only a small subset.
We formalize both the general model and this specialization below.

\input{figures/system_model}

\subsection{Producers, records, and decoders}
Let $\Theta$ be a latent state taking values in a finite set $\cT$, with prior $P_\Theta$.
There are $m$ producers indexed by $i\in\idx{m}$.
Producer $i$ observes evidence $E_i\in\cY$ generated according to a conditional distribution $P_{E_i\mid\Theta}$, and publishes a public record
\[
  X_i=f_i(E_i)\in\cX.
\]
We write $X^m=(X_1,\dots,X_m)$ for the entire collection of records released in a window.

There are $N$ decoders indexed by $j\in\idx{N}$.
We allow multiple decoders to model community settings where verified artifacts are public and can be reused.
At the same time, our fundamental limits depend only on aggregate budgets, so the results apply equally to the single-decoder case $N=1$.
Decoder $j$ starts with a prior belief $\pi_j$ on $\Theta$ (capturing side information that may differ across decoders) and forms an estimate after inspecting only a small subset of the records.

\subsection{Attention constraints and routing}
Decoder $j$ has an attention budget $k_j\in\mathbb{Z}_{\ge 0}$.
In one window it inspects a subset
\[
  S_j\subseteq\idx{m},\qquad \abs{S_j}\le k_j,
\]
and observes the corresponding inspected information set $I_j=\{X_i:i\in S_j\}$.
We define the total attention capacity
\[
  K:=\sum_{j=1}^N k_j.
\]

Many systems expose cheap public metadata for every record (timestamps, user features, embeddings, coarse heuristic scores, and so on).
We model this as a public signal $U_i\in\cU$ that is visible to all decoders before they choose what to inspect.
A routing rule maps the collection of public signals into inspected sets.
Formally, decoder $j$ selects
\[
  S_j=r_j(U^m,\pi_j,\omega_j),
\]
through routing $r_j$, with $\omega_j$ being private randomness.
We assume that all private randomness variables are independent of the record-generation process, including $\Theta$ and all producer-side observations.

\subsection{Log-loss risk}
After observing $I_j$ and starting from prior $\pi_j$, decoder $j$ outputs a distribution $q_j$ on $\cT$.
Under log-loss, the incurred loss is $-\log q_j(\Theta)$.
We evaluate population-level performance by the average Bayes risk
\begin{equation}
  \mathrm{Risk}:=\frac{1}{N}\sum_{j=1}^N \E\bracks{-\log q_j(\Theta)}.
  \label{eq:risk-pop}
\end{equation}
When $q_j$ matches the true Bayesian posterior, the risk attains its theoretical lower bound, $\Hh(\Theta\mid I_j,\pi_j)$. This identity establishes that \emph{the fundamental limit of inference is governed by the residual uncertainty} in the data, thereby justifying the use of Information Gain (entropy reduction) as the canonical metric for epistemic throughput.

\subsection{Haystack specialization: screening and verification}
The main results in Section~\ref{sec:theory} focus on a canonical haystack regime in which screening is cheap and verification is expensive.
The key modeling ingredient is that only a small fraction of inspected records can become informative about $\Theta$ upon verification.
We capture this sparsity using a latent type indicator.

\begin{assumption}[Informative records and screening]
\label{asmp:screening}
Each inspected record has a latent type $T\in\{0,1\}$ with $\Prob(T=1)=p$, where $p\in(0,1)$.
An inspection reveals a screening statistic $Z\in\cZ$.
The pair $(T,Z)$ is independent of $\Theta$.
We measure screening quality by the mutual information $J:=\I(T;Z)$.
\end{assumption}

Assumption~\ref{asmp:screening} says that screening does not directly reveal information about the target $\Theta$.
Instead, it provides a noisy proxy for whether a record is worth verifying.
Policies may map the screening statistic to a score $\eta(Z)$ and use it to decide which records to verify (for example, the Bayes score $\eta(z)=\Prob(T=1\mid Z=z)$ used in our achievability analysis later).

Verification is the expensive channel that can reduce uncertainty about $\Theta$.
In the canonical model we analyze, verification reveals the type together with an additional observation.
Revealing $T$ is a modeling simplification that isolates the role of verification as a distinct side channel.

\begin{assumption}[Verification channel]
\label{asmp:verification-channel}
If a record is verified, the decoder observes a pair $(T,V)$.
Conditioned on $(\Theta,T)$, the variable $V$ is independent of $Z$.
If $T=0$, then $V$ is independent of $\Theta$.
If $T=1$, then $V$ carries information about $\Theta$.
We define
\begin{equation}
  I_{\mathrm{ver}}:=\I(\Theta;V\mid T=1).
  \label{eq:iver}
\end{equation}
\end{assumption}

A policy in the haystack regime screens $K$ candidate records in a window and verifies at most $B$ of them.
We index the screened candidates by $i\in\idx{K}$ and write $\cB\subseteq\idx{K}$ for the (random) verified index set, with $\abs{\cB}\le B$.
The verified outputs $\{(T_i,V_i)\}_{i\in\cB}$ are published as public artifacts and can be reused by all decoders.

We include two specializations of the latent state $\Theta$ to cover different coupling structures across records.
In the global-$\Theta$ specialization, all $K$ records share a single target $\Theta$; conditioned on $\Theta$, the tuples $(T_i,Z_i,V_i)$ are i.i.d.\ across $i$, but they are generally correlated after marginalizing over $\Theta$.
In the per-record-claim specialization, $\Theta=(\Theta_1,\ldots,\Theta_K)$ collects independent claim variables; conditioned on $(\Theta_1,\ldots,\Theta_K)$ the tuples are independent across $i$ (not necessarily identically distributed given $\Theta$), while marginally the tuples are i.i.d.\ across $i$.
In both specializations, conditioned on $(\Theta,T_i)$ in the global model and on $(\Theta_i,T_i)$ in the decoupled model, the verification output $V_i$ is independent of all screening statistics and all other verification outputs.

\textbf{Scope of results.}
Our converse results (notably Lemma~\ref{lem:enrichment} and Theorem~\ref{thm:ver-tradeoff}) apply to both specializations, as they depend only on the screening/verification channels, budgets $(K,B)$, and information parameters $(J,I_{\mathrm{ver}})$.
Our achievability and tightness results are proved under the decoupled-claim specialization (Assumption~\ref{asmp:decoupled-claims}), where the verification stage reduces to an optimal selection problem without global coupling.

We interpret $(K,B)$ as community-level budgets.
This is a natural abstraction when verified artifacts can be published as public objects that all decoders can use.

\begin{assumption}[Public artifacts and shared access]
\label{asmp:public-artifacts}
Within one window, every verification output is published as a public artifact.
All decoders can access the set of verified artifacts.
Therefore, for fundamental-limit analysis under budgets $(K,B)$, we can evaluate performance using a hypothetical agent that observes all verification outputs generated in the window.
This does not introduce a fusion center into the system.
It is only an evaluation device that matches what any decoder can reconstruct from public artifacts.
\end{assumption}

%% file: figures/system_model.tex
\begin{figure*}[t]

\definecolor{paperBlue}{RGB}{55, 110, 180}   
\definecolor{paperTeal}{RGB}{0, 150, 136}    
\definecolor{paperRed}{RGB}{230, 80, 80}     
\definecolor{paperGray}{RGB}{236, 240, 241}  
\definecolor{paperDark}{RGB}{44, 62, 80}     
\definecolor{paperLight}{RGB}{189, 195, 199} 

\centering
\resizebox{\textwidth}{!}{
\begin{tikzpicture}[
    font=\small,
    >=LaTeX,
    node distance=1.2cm, 
    block/.style={
        draw=paperDark!80,
        thick,
        rounded corners=3pt, 
        fill=white,
        text=paperDark,
        align=center,
        blur shadow={shadow blur steps=5, shadow xshift=1pt, shadow yshift=-1pt}
    },
    process/.style={
        block,
        fill=paperBlue!8, 
        draw=paperBlue!80,
        minimum height=2.8cm,
        minimum width=2.4cm
    },
    verify/.style={
        block,
        fill=paperTeal!8,
        draw=paperTeal!80,
        minimum height=2.8cm,
        minimum width=2.4cm
    },
    doc/.style={
        draw=paperDark!40,
        fill=paperLight!30,
        minimum width=0.45cm,
        minimum height=0.55cm,
        inner sep=0pt
    },
    docTrue/.style={
        doc,
        fill=paperRed!80,
        draw=paperRed!60!black
    },
    arrow/.style={
        ->,
        draw=paperDark!80,
        thick,
        line width=1pt
    },
    label/.style={
        text=paperDark,
        font=\bfseries\footnotesize
    },
    sublabel/.style={
        text=paperDark!70,
        font=\scriptsize
    }
]

    \node[label, align=center] (input_label) at (0, 2.0) {Public Records \\ (The Haystack)};
    
    \foreach \x in {-0.6, 0, 0.6} {
        \foreach \y in {-0.6, 0, 0.6, 1.2} {
             \node[doc] at (\x, \y) {}; 
        }
    }
    \node[docTrue] at (-0.6, 1.2) {};
    \node[docTrue] at (0, -0.6) {};
    \node[docTrue] at (0.6, 0.6) {};
    
    \node[sublabel] at (0, -1.2) {Total Volume: $K$};
    \node[fit={(-1,-1.2) (1,2)}, name=haystack_group] {};

    \node[process, right=2.0cm of haystack_group] (screening) {
        \textbf{Screening}\\
        (Broad Attention)\\[0.5em]
        \rule{1.5cm}{0.4pt}\\[0.5em]
        \textit{Noisy Filter}\\
        $Z \sim P_{Z|T}$
    };
    
    \draw[arrow] (0.9,0.4) -- node[above, font=\scriptsize] {Inspect All} node[below, font=\scriptsize] {$K$} (screening);

    \node[right=2.0cm of screening] (ranking_center) {};
    
    \begin{scope}[shift={(ranking_center)}]
        \node[label, above=1.6cm] {Sorted by Score $\eta(Z)$};
        
        \node[docTrue] (top1) at (0, 1.35) {};
        \node[docTrue] (top2) at (0, 0.85) {};
        \node[doc, fill=paperLight!60] (top3) at (0, 0.35) {}; 
        \node[docTrue] (top4) at (0, -0.15) {};
        
        \draw[densely dashed, paperRed, thick] (-0.8, -0.45) -- (0.8, -0.45);
        \node[right, font=\tiny, text=paperRed] at (0.8, -0.45) {Cutoff};
        
        \node[doc, opacity=0.6] at (0, -0.8) {};
        \node[doc, opacity=0.4] at (0, -1.3) {};
        
        \draw[->, dashed, draw=paperLight!80, thick, bend left=20] (0.3, -1.0) to[out=-30, in=180] (1.0, -2.0) node[right, font=\scriptsize, text=paperLight, rotate=45] {Discarded ($K-B$)};
    \end{scope}
    
    \node[fit={(ranking_center) (0,1.5) (0,-1.5)}, name=rank_group] {};

    \draw[arrow] (screening) -- node[midway, above, font=\scriptsize, align=center, yshift=2pt, rotate=0] {Scoring} (7.8,0.4);

    \node[verify, right=2.5cm of rank_group] (verification) {
        \textbf{Verification}\\
        (Deep Attention)\\[0.5em]
        \rule{1.5cm}{0.4pt}\\[0.5em]
        \textit{High Fidelity}\\
        $V \sim P_{V\mid \Theta, T}$

    };

    \draw[decorate, decoration={brace, amplitude=4pt, mirror}, thick, draw=paperDark] 
        ($(top4.south east)+(0.1,0)$) -- ($(top1.north east)+(0.1,0)$) 
        coordinate[midway, xshift=4pt] (brace_mid);
        
    \draw[arrow] (brace_mid) -- node[above, font=\scriptsize] {Select Top-$B$} (verification.west |- brace_mid);

    \node[block, right=1.2cm of verification, align=left, minimum width=2.2cm, fill=paperGray!20] (output) {
        \textbf{Public Artifacts}\\[0.3em]
        \scriptsize $\{(T_i, V_i)\}_{i \in \cB}$\\[0.2em]
        \scriptsize Gain $\propto \sqrt{JKB}$
    };

    \draw[arrow] (verification) -- node[above, font=\scriptsize] {Verify} (output);

    
    \node[below=0.15cm of screening, text=paperBlue, font=\scriptsize] (cap1) {Capacity: $J = \I(T;Z)$};
    \node[below=0.15 of verification, text=paperTeal, font=\scriptsize]  {Capacity: $B\cdot I_{\mathrm{ver}}$};

    \node[font=\scriptsize\bfseries, text=paperDark!60, anchor=north] at ($(screening.south)!0.5!(verification.south) + (-1, -0.4)$) {Leverage Factor: Oversampling Ratio $K/B$};

    \begin{scope}[on background layer]
        \node[fit=(haystack_group)(output)(rank_group)(cap1), fill=paperGray!40, rounded corners=8pt, draw=paperGray!100, dashed, inner sep=12pt] (bg) {};
        \node[anchor=north west, text=paperDark!50, font=\bfseries\scriptsize, inner sep=5pt] at (14,2.6) {ACI Inference Pipeline};
    \end{scope}

\end{tikzpicture}
}
\caption{\textbf{The ACI Inference Pipeline.}
The system operates as a two-stage information refinery in the haystack regime ($B \ll K$).
A massive volume of public records ($K$) is first filtered by a cheap, noisy screening channel ($Z$) to prioritize attention.
Based on the screening scores $\eta(Z)$, only the most promising top-$B$ candidates receive expensive, high-fidelity verification ($V$).
The \emph{oversampling ratio} $K/B$ acts as a leverage factor, amplifying the yield of the scarce verification budget to achieve the $\Theta(\sqrt{JKB})$ scaling.}
\label{fig:system_model}
\end{figure*}
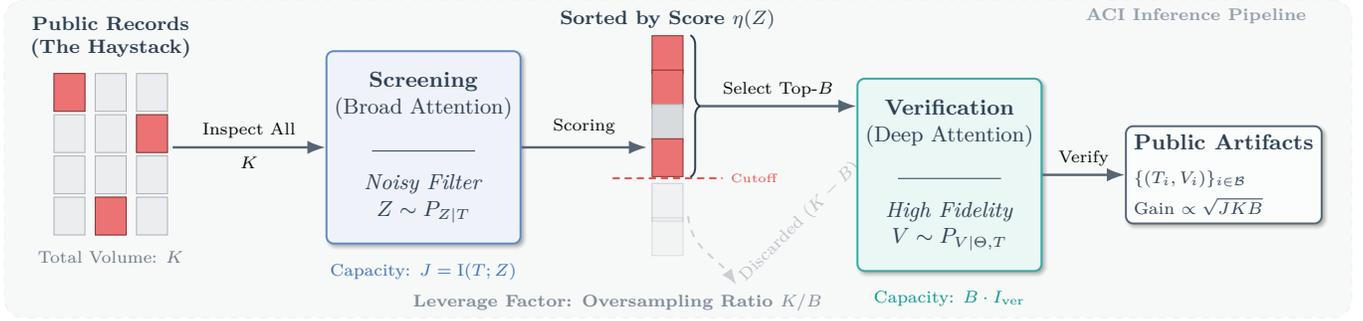

%% file: sections/theory.tex
\section{Main Theoretical Results}
\label{sec:theory}

This section studies the haystack specialization introduced in Section~\ref{sec:model}.
We focus on the non-adversarial setting.
Our goal is to quantify how inspection and verification interact under Bayes log-loss.

We use the following budgets.
The community inspects $K$ independent records and verifies at most $B$ among those inspected records.
We write $\alpha:=B/K$.
All logarithms are base $2$.
We use $\ln$ only for natural logarithms in log-odds expressions.
Entropy and mutual information are measured in bits.

\subsection{A fundamental tradeoff between inspection and verification}
\label{subsec:tradeoff}

Our first question is: given budgets $(K,B)$ and screening quality $J=\I(T;Z)$, how much can inspection amplify a scarce verification budget under log-loss?
Our main answer is Theorem~\ref{thm:ver-tradeoff}, which gives a universal converse bound on the information gain.
The key technical ingredient is a selection enrichment bound (Lemma~\ref{lem:enrichment}), which limits how much \emph{any} selection rule based on $Z$ can increase the hit rate of informative records.

\begin{lemma}[Selection enrichment bound]
\label{lem:enrichment}
Assume Assumption~\ref{asmp:screening}.
Let $S\in\{0,1\}$ be any (possibly randomized) selection rule that depends on $Z$ and on auxiliary randomness independent of $(T,Z)$, and let
$\alpha:=\Prob(S=1)$.
Then
\begin{equation}
\label{eq:enrichment}
\Prob(T=1\mid S=1)
\;\le\;
p + \sqrt{\frac{\ln 2}{2\alpha}\,J}.
\end{equation}
\end{lemma}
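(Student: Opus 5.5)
Because $S$ is generated from $Z$ and auxiliary randomness independent of $(T,Z)$, the chain $T \to Z \to S$ is Markov. The data-processing inequality therefore gives $\I(T;S)\le \I(T;Z)=J$. This reduces the lemma to a statement about two binary variables $(T,S)$ with $\I(T;S)\le J$. The task is to show that the conditional hit rate $p_1:=\Prob(T=1\mid S=1)$ cannot exceed $p$ by more than $\sqrt{\ln 2\, J/(2\alpha)}$. If $\alpha=0$ the statement is vacuous, so assume $\alpha>0$ and write $P_{T\mid S=1}=\mathrm{Bern}(p_1)$.

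Decompose the mutual information as an average of conditional divergences:
\[
\I(T;S)=\alpha\,\KL\bigl(P_{T\mid S=1}\,\|\,P_T\bigr)+(1-\alpha)\,\KL\bigl(P_{T\mid S=0}\,\|\,P_T\bigr).
\]
Both terms are nonnegative, so dropping the second gives
\[
\KL\bigl(\mathrm{Bern}(p_1)\,\|\,\mathrm{Bern}(p)\bigr)\le \frac{J}{\alpha},
\]
measured in bits.

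Next apply Pinsker's inequality in nats. For Bernoulli laws the total variation distance is $|p_1-p|$. Since $\KL_{\text{nats}}=\ln 2\cdot \KL_{\text{bits}}$, Pinsker gives
\[
|p_1-p|\le \sqrt{\tfrac12\,\KL_{\text{nats}}}\le \sqrt{\frac{\ln 2}{2\alpha}\,J},
\]
and dropping the absolute value yields \eqref{eq:enrichment}.

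The main obstacle is bookkeeping rather than substance. First, the conversion factor $\ln 2$ between bits and nats must be placed correctly so the constant matches. Second, the data-processing step must be justified under randomized selection. For the latter, write $S=g(Z,\omega)$ with $\omega$ independent of $(T,Z)$. Then $\omega\perp T\mid Z$, and hence $\I(T;S)\le \I(T;Z,\omega)=\I(T;Z)$.

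No tighter bound, such as using the $S=0$ term, is needed. The $1/\alpha$ factor arising from discarding that term is exactly what produces the $\sqrt{J/\alpha}=\sqrt{JK/B}$ leverage appearing later.
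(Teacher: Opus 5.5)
Your proof is correct and follows the paper's argument step for step: data processing gives $\I(T;S)\le J$, the conditional-divergence decomposition of $\I(T;S)$ (dropping the $S=0$ term) gives $\KL\le J/\alpha$, and Pinsker's inequality with the $\ln 2$ bits-to-nats factor and $\mathrm{TV}=\abs{p_1-p}$ for binary $T$ finishes the bound. Two small additions go beyond what the paper writes out: you justify data processing explicitly under randomized selection, and you dispose of the case $\alpha=0$.
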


Lemma~\ref{lem:enrichment} makes the role of $J$ explicit.
It states that a cheap screening score cannot create an arbitrarily clean verified set.
The upper bound depends on $\alpha$ because enrichment is harder when verification is extremely sparse.

We now bound the information gain under budgets $(K,B)$.

\begin{definition}[Budgets $(K,B)$]
\label{def:budgets}
A policy uses budgets $(K,B)$ if it inspects $K$ i.i.d. records, and it verifies at most $B$ among the inspected records.
The policy may use the screening statistics $Z_1,\dots,Z_K$ to select which records to verify.
\end{definition}

Assumption~\ref{asmp:public-artifacts} lets us evaluate the community through the public artifacts created in one window.
For $i\in\idx{K}$, define the verification transcript
\[
  Y_i:=\begin{cases}
  \bot, & \text{if record $i$ is not verified},\\
  (T_i,V_i), & \text{if record $i$ is verified},
  \end{cases}
\]
where $\bot$ is a fixed symbol.
We write $\bm{Z}:=(Z_1,\dots,Z_K)$ and $\bm{Y}:=(Y_1,\dots,Y_K)$.
We define $D(K,B)$ as the Bayes-optimal expected log-loss after observing $(\bm{Z},\bm{Y})$ under budgets $(K,B)$.

\begin{theorem}[A tradeoff between verification and attention under log-loss]
\label{thm:ver-tradeoff}
Assume Assumptions~\ref{asmp:screening} and \ref{asmp:verification-channel}.
Consider any policy that uses budgets $(K,B)$ in the sense of Definition~\ref{def:budgets}.
Then
\begin{equation}
\label{eq:ver-tradeoff}
\Hh(\Theta)-D(K,B)
\;\le\;
B\cdot I_{\mathrm{ver}}\Bigg(
p + \sqrt{\frac{\ln 2}{2}}\,\sqrt{\frac{JK}{B}}
\Bigg).
\end{equation}
\end{theorem}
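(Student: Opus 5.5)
We bound the information gain by the mutual information $\I(\Theta;\bm{Z},\bm{Y})$, because $\Hh(\Theta)-D(K,B)=\I(\Theta;\bm{Z},\bm{Y})$ when the Bayes-optimal predictor is used. By Assumption~\ref{asmp:screening}, $\bm{Z}$ is independent of $\Theta$, so $\I(\Theta;\bm{Z})=0$. The chain rule then gives $\I(\Theta;\bm{Z},\bm{Y})=\I(\Theta;\bm{Y}\mid\bm{Z})$. The selection $\cB$ is a function of $\bm{Z}$ and of private randomness that is independent of everything else. Hence, conditioned on $\bm{Z}$ and that randomness, $\bm{Y}$ is equivalent to the collection $\{(T_i,V_i)\}_{i\in\cB}$, which has at most $B$ entries. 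Expanding this collection with the chain rule over the selected indices, and using that conditioning on $(\Theta,T_i)$ makes $V_i$ independent of everything else (the independence structure of both specializations), each term can be bounded by $\I(\Theta;T_i,V_i\mid \text{past},\bm{Z})\le \I(\Theta;T_i,V_i)$-type quantities. In turn, $\I(\Theta;T_i,V_i)=\I(\Theta;T_i)+\I(\Theta;V_i\mid T_i)=0+\Prob(T_i=1)\,I_{\mathrm{ver}}$, since $T_i\perp\Theta$ and $V_i\perp\Theta$ given $T_i=0$. Carrying the selection through, this yields
\[
\I(\Theta;\bm{Z},\bm{Y})\;\le\; I_{\mathrm{ver}}\sum_{i=1}^K \Prob(i\in\cB,\,T_i=1) \;=\; I_{\mathrm{ver}}\,\E\bigl[\#\{i\in\cB:T_i=1\}\bigr].
\]

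The second step bounds the expected number of verified informative records with Lemma~\ref{lem:enrichment}. Write $\alpha_i:=\Prob(i\in\cB)$, so that $\sum_i\alpha_i\le B$. For each $i$, the indicator $S_i=\mathbbm{1}\{i\in\cB\}$ depends on $Z_i$ and on auxiliary randomness consisting of the other $Z_j$'s and the private coins. Because records are i.i.d., this auxiliary randomness is independent of $(T_i,Z_i)$. In the global-$\Theta$ model this independence holds marginally, because $(T_j,Z_j)$ are independent of $\Theta$ and hence i.i.d. across $j$. The lemma therefore gives $\Prob(T_i=1,i\in\cB)\le \alpha_i p+\sqrt{\tfrac{\ln 2}{2}J\alpha_i}$. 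Summing over $i$ and applying Cauchy–Schwarz, $\sum_i\sqrt{\alpha_i}\le\sqrt{K\sum_i\alpha_i}\le\sqrt{KB}$, we obtain
\[
\E\bigl[\#\{i\in\cB:T_i=1\}\bigr]\le Bp+\sqrt{\tfrac{\ln 2}{2}}\sqrt{JKB},
\]
which, after multiplying by $I_{\mathrm{ver}}$, is exactly \eqref{eq:ver-tradeoff}.

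The main obstacle is the chain-rule step in the global-$\Theta$ specialization. There, conditioning on earlier verification outputs can correlate $V_i$ with the others through $\Theta$, so the per-term bound must be justified carefully. I would first aim to show $\I(\Theta;Y_i\mid Y^{i-1},\bm{Z})\le \I(\Theta;Y_i\mid \bm{Z})$. This uses the fact that, given $\Theta$ and $\bm{Z}$ together with the private coins, the transcripts $Y_i$ are conditionally independent: the $T_i$ are independent of $\Theta$, and each $V_i$ depends only on $(\Theta,T_i)$. For conditionally independent observations, $\I(\Theta;Y^K\mid\bm{Z})\le\sum_i\I(\Theta;Y_i\mid\bm{Z})$ holds, since $\Hh(Y^K\mid\Theta,\bm{Z})=\sum_i \Hh(Y_i\mid\Theta,\bm{Z})$ while $\Hh(Y^K\mid\bm{Z})\le\sum_i\Hh(Y_i\mid \bm{Z})$. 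It then remains to show $\I(\Theta;Y_i\mid\bm{Z})=\Prob(i\in\cB,T_i=1)\,I_{\mathrm{ver}}$. For this, given $\bm{Z}$ and $i\in\cB$, the type $T_i$ has a $\bm{Z}$-dependent law but is still independent of $\Theta$, and $V_i$ given $T_i=1$ is independent of $\bm{Z}$. In the per-record model the same computation applies with $\Theta_i$ in place of $\Theta$, and there the sum is in fact an equality.
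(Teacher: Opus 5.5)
Your proof is correct and follows essentially the paper's route: you reduce the gain to $\I(\Theta;\bm{Y}\mid\bm{Z})\le I_{\mathrm{ver}}\,\E\bigl[\#\{i\in\cB:T_i=1\}\bigr]$ using that $V_i$ is independent of everything else given $(\Theta,T_i)$, and then bound the expected hit count with Lemma~\ref{lem:enrichment}; the differences are only local (you apply the lemma index by index and finish with Cauchy--Schwarz where the paper averages over a uniformly random index, and you use subadditivity under conditional independence where the paper uses the chain rule plus ``conditioning reduces entropy''). One slip to fix: the identity $\Hh(\bm{Y}\mid\Theta,\bm{Z})=\sum_i\Hh(Y_i\mid\Theta,\bm{Z})$ is false for randomized policies, because the private randomness couples which $Y_i$ equal $\bot$; keep the private randomness $\omega$ in every conditioning, which is harmless because $\omega$ is independent of $(\Theta,\bm{Z})$, so $\I(\Theta;\bm{Y}\mid\bm{Z})\le\I(\Theta;\bm{Y}\mid\bm{Z},\omega)$, and each per-record term $\I(\Theta;Y_i\mid\bm{Z},\omega)$ still equals $I_{\mathrm{ver}}\,\Prob(i\in\cB,T_i=1)$.
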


Theorem~\ref{thm:ver-tradeoff} is the main converse bound.
The linear term $Bp$ is the baseline gain of random verification.
The square-root term quantifies the best possible amplification from inspection.

The next corollary rewrites the converse as a lower bound on the verification budget required for a target gain.

\begin{corollary}[Verification required for a target gain]
\label{cor:ver-budget}
Assume the setting of Theorem~\ref{thm:ver-tradeoff}.
Fix any target information gain $\Delta\in(0,\Hh(\Theta))$.
If a policy satisfies $\Hh(\Theta)-D(K,B)\ge \Delta$, then $B$ must satisfy
\begin{equation}
\label{eq:ver-budget}
B \;\ge\;
\frac{1}{4p^2}
\Bigg(
\sqrt{\frac{\ln 2}{2}\,J K + \frac{4p\Delta}{I_{\mathrm{ver}}}}
-
\sqrt{\frac{\ln 2}{2}\,J K}
\Bigg)^2.
\end{equation}
\end{corollary}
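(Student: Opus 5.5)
Corollary~\ref{cor:ver-budget} follows from Theorem~\ref{thm:ver-tradeoff} by solving the converse inequality for $B$. The right-hand side of \eqref{eq:ver-tradeoff} is a quadratic in $\sqrt{B}$, so we substitute $x:=\sqrt{B}\ge 0$ and set $a:=\sqrt{\tfrac{\ln 2}{2}JK}$. Then \eqref{eq:ver-tradeoff} reads
\[
\Hh(\Theta)-D(K,B)\;\le\; I_{\mathrm{ver}}\bigl(p\,x^2 + a\,x\bigr),
\]
since $B\sqrt{JK/B}=\sqrt{JKB}=\sqrt{JK}\,x$. Combining this with the hypothesis $\Hh(\Theta)-D(K,B)\ge\Delta$, and dividing by $I_{\mathrm{ver}}>0$, gives
\[
p\,x^2 + a\,x - \frac{\Delta}{I_{\mathrm{ver}}}\;\ge\;0 .
\]
We need $I_{\mathrm{ver}}>0$ here. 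If $I_{\mathrm{ver}}=0$, the theorem forces zero gain, which contradicts $\Delta>0$, so any such policy is impossible and the claim holds vacuously.

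Next we solve the quadratic. The polynomial $g(x)=px^2+ax-\Delta/I_{\mathrm{ver}}$ has $p>0$ and $g(0)=-\Delta/I_{\mathrm{ver}}<0$. Hence it has exactly one nonnegative root,
\[
x^\star=\frac{-a+\sqrt{a^2+4p\Delta/I_{\mathrm{ver}}}}{2p},
\]
and $g(x)\ge 0$ with $x\ge 0$ forces $x\ge x^\star$. Since $x^\star\ge 0$, squaring preserves the inequality, and $B=x^2\ge (x^\star)^2$. This is exactly \eqref{eq:ver-budget} once we write the prefactor $1/(4p^2)$ and the radicands $a^2=\tfrac{\ln 2}{2}JK$ and $a^2+4p\Delta/I_{\mathrm{ver}}$.

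There is no real obstacle; the corollary is a direct inversion. The only care points are checking the sign conditions ($p>0$, $x\ge0$, $x^\star\ge0$) that justify taking the positive root and squaring, and handling the degenerate case $I_{\mathrm{ver}}=0$ as above. The restriction $\Delta<\Hh(\Theta)$ only ensures the target is meaningful and is not needed for the algebra.
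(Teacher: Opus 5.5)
Your proposal is correct and follows the paper's proof step for step. Like the paper, you treat $I_{\mathrm{ver}}=0$ as vacuous, substitute $x=\sqrt{B}$ and $a=\sqrt{\tfrac{\ln 2}{2}JK}$ to turn Theorem~\ref{thm:ver-tradeoff} into $px^2+ax-\Delta/I_{\mathrm{ver}}\ge 0$, take the unique nonnegative root, and square — and your observation that $g(0)<0$ is a slightly more explicit justification of uniqueness than the paper's appeal to convexity.
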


Corollary~\ref{cor:ver-budget} separates two regimes.
If $J=0$, screening is useless and verification must scale as $\Omega(1/p)$ in the sparse limit.
If $J K$ is large, inspection can reduce the required verification budget.

\subsection{Achievability with score-based verification}
\label{subsec:achievability}

Theorem~\ref{thm:ver-tradeoff} exhibits a square-root \emph{amplification} term of order $\sqrt{J K/B}$.
A natural question is whether this term is achievable, or whether it is an artifact of the converse proof.
We answer in the affirmative: in a local weak-screening model (Assumption~\ref{asmp:weak-screening}), the simple top-$B$ score-based verification policy achieves the same scaling up to constants (Theorem~\ref{thm:score-achievability}).

Let
\[
\eta(z):=\Prob(T=1\mid Z=z)
\]
denote the Bayes score of the screening statistic.
We study a local regime in which $\eta(Z)$ is close to $p$.

\begin{assumption}[Weak screening through a log-odds score]
\label{asmp:weak-screening}
Let $\eta(Z)=\Prob(T=1\mid Z)$.
There exist a scalar $\varepsilon\ge 0$ and a real-valued random variable $G$ such that
\begin{equation}
\label{eq:logit-local}
\ln\frac{\eta(Z)}{1-\eta(Z)}
=
\ln\frac{p}{1-p}+\varepsilon G
\end{equation}
almost surely.
We assume $\E[G]=0$, $\E[G^2]=1$, and $\E[\abs{G}^3]<\infty$.
We also assume that $G$ has a continuous distribution.
\end{assumption}

Assumption~\ref{asmp:weak-screening} follows a standard Bayesian view of evidence accumulation rather than an ad hoc modeling choice.
On the log-odds scale, Bayes' rule is additive: the posterior log-odds equal the prior log-odds plus a weight-of-evidence term (a log-likelihood ratio), a perspective emphasized by Good~\cite{good1950probability} in cryptanalysis (with evidence measured in ``bans'').
Assumption~\ref{asmp:weak-screening} adopts a local parametrization by writing this log-odds increment as $\varepsilon G$.
The qualifier \emph{weak} refers to the regime $\varepsilon\to 0$, where $\eta(Z)=p+O(\varepsilon)$ and the screening information $J=\I(T;Z)$ vanishes accordingly (typically $J=O(\varepsilon^2)$).
We normalize $G$ to have mean zero and unit variance so that $\varepsilon$ captures the overall screening strength; the remaining moment and continuity conditions are regularity assumptions used to control the asymptotics.
From a modern machine-learning standpoint, \eqref{eq:logit-local} is the canonical logit-link form, closely related to logistic regression with a linear predictor~\cite{cox1958regression}, where $G$ is a standardized score and $\varepsilon$ sets its signal-to-noise level.

Fix $\alpha\in(0,1)$.
Let $q_\alpha$ be the $(1-\alpha)$-quantile of $G$, namely
\[
q_\alpha:=\inf\{q\in\mathbb{R}: \Prob(G\le q)\ge 1-\alpha\}.
\]
We define the upper-tail mean
\begin{equation}
\label{eq:tail-mean}
m_G(\alpha):=\frac{1}{\alpha}\,\E\!\bracks{G\,\1\{G\ge q_\alpha\}}.
\end{equation}

\begin{assumption}[Decoupled claims for achievability]
\label{asmp:decoupled-claims}
We use the following decoupled claim model in Theorem~\ref{thm:score-achievability}.
The latent state is $\Theta=(\Theta_1,\dots,\Theta_K)$, where $\Theta_1,\dots,\Theta_K$ are i.i.d. with prior $P_\Theta$.
Moreover, $\Theta$ is independent of $(T^K,Z^K)$.
Conditioned on $(\Theta_i,T_i)$, the verification output $V_i$ is generated by the verification channel in Assumption~\ref{asmp:verification-channel} with $\Theta$ replaced by $\Theta_i$.
The collection $(V_i)_{i\in\idx{K}}$ is conditionally independent across $i$ given $(\Theta,T^K)$.
\end{assumption}

\begin{theorem}[Score-based verification achieves a square-root gain]
\label{thm:score-achievability}
Assume Assumptions~\ref{asmp:screening}, \ref{asmp:verification-channel}, \ref{asmp:weak-screening}, and~\ref{asmp:decoupled-claims}.
Fix $\alpha\in(0,1)$ and set $B=\lfloor \alpha K\rfloor$.
We use the following policy.
We inspect $K$ i.i.d. records and observe $Z_1,\dots,Z_K$.
We compute the scores $\eta(Z_i)$.
We then verify the $B$ records with the largest scores.
Then, in the joint limit $\varepsilon\to 0$ and $K\to\infty$ with fixed $\alpha$,
\begin{multline}
\label{eq:score-achievability}
\Hh(\Theta)-D(K,B)
\;\ge\;
\min\Big\{\Hh(\Theta),\\
I_{\mathrm{ver}}\big(Bp + c_G(p,\alpha)\,\sqrt{J\,K\,B}\big)\Big\}
\; +\; o\!\big(\sqrt{J\,K\,B}\big)
\end{multline}
where $J=\I(T;Z)$ and
\begin{equation}
\label{eq:cG}
c_G(p,\alpha):=\sqrt{2\ln 2\,p(1-p)\,\alpha}\;m_G(\alpha).
\end{equation}
\end{theorem}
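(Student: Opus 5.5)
The plan is to reduce the information gain to an expected hit count among the verified records, and then evaluate that hit count by a first-order expansion in $\varepsilon$ combined with a law of large numbers for the top-$B$ order statistics.

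\textbf{Step 1: reduce the gain to hits.} Under Assumption~\ref{asmp:decoupled-claims}, $\Theta$ is independent of $(T^K,Z^K)$, and each $V_i$ depends on the data only through $(\Theta_i,T_i)$. The set $\cB$ is a function of $\bm{Z}$ alone, since the policy takes the top-$B$ scores $\eta(Z_i)$. By the chain rule and conditional independence,
\[
\Hh(\Theta)-D(K,B)=\I(\Theta;\bm{Z},\bm{Y})=\sum_{i}\E\bigl[\1\{i\in\cB\}\,\I(\Theta_i;V_i\mid T_i,\bm{Z})\bigr].
\]
Given $T_i=0$ the term $V_i$ is useless, and given $T_i=1$ it contributes exactly $I_{\mathrm{ver}}$; here we use that $V_i$ is independent of $\bm{Z}$ given $(\Theta_i,T_i)$ and that $\Theta_i$ is independent of $(T_i,\bm{Z})$. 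This gives
\[
\Hh(\Theta)-D(K,B)=I_{\mathrm{ver}}\,\E\Bigl[\sum_{i\in\cB}T_i\Bigr]=I_{\mathrm{ver}}\,\E\Bigl[\sum_{i\in\cB}\eta(Z_i)\Bigr],
\]
where the second equality uses $\E[T_i\mid\bm{Z}]=\eta(Z_i)$ and that $\cB$ is $\bm{Z}$-measurable. The value is automatically at most $\Hh(\Theta)=K\,\Hh(\Theta_1)$, so the $\min$ in \eqref{eq:score-achievability} is harmless and we only need to lower bound the second argument.

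\textbf{Step 2: expand the hit rate.} The map $g\mapsto\eta$ is strictly increasing by \eqref{eq:logit-local}, so top-$B$ by $\eta$ coincides with top-$B$ by $G_i$; ties have probability zero because $G$ is continuous. A Taylor expansion of the logistic function gives $\eta=p+\varepsilon p(1-p)G+R$ with $|R|\le C\varepsilon^2G^2$. Hence
\[
\E\Bigl[\sum_{i\in\cB}\eta(Z_i)\Bigr]=Bp+\varepsilon p(1-p)\,\E\Bigl[\sum_{i\in\cB}G_i\Bigr]+O(\varepsilon^2K).
\]
The middle term is the sum of the top-$\lfloor\alpha K\rfloor$ order statistics of $K$ i.i.d.\ copies of $G$. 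By the standard law of large numbers for trimmed sums (L-statistics), using $\E[G^2]<\infty$ and the continuity of $G$ at $q_\alpha$, this sum equals $K\alpha\,m_G(\alpha)+o(K)$ in expectation. The middle term is therefore $\varepsilon p(1-p)\,B\,m_G(\alpha)+o(\varepsilon K)$.

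\textbf{Step 3: express $\varepsilon$ through $J$.} Since $\eta(Z)$ is sufficient for $T$,
\[
J=\I(T;\eta(Z))=\E\bigl[\KL(\mathrm{Bern}(\eta)\,\|\,\mathrm{Bern}(p))\bigr]=\frac{\E[(\eta-p)^2]}{2p(1-p)\ln 2}+O\bigl(\E|\eta-p|^3\bigr).
\]
Using $\E[G^2]=1$ and $\E|G|^3<\infty$, this yields $J=\varepsilon^2p(1-p)/(2\ln 2)+o(\varepsilon^2)$, i.e.\ $\varepsilon=\sqrt{2\ln 2\,J/(p(1-p))}\,(1+o(1))$. Substituting, and using $B=\sqrt{\alpha KB}\,(1+o(1))$, the middle term becomes $\sqrt{2\ln 2\,p(1-p)\alpha}\;m_G(\alpha)\sqrt{JKB}=c_G(p,\alpha)\sqrt{JKB}$, matching \eqref{eq:cG}.

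\textbf{Step 4: control the errors; this is the main obstacle.} Because $J\asymp\varepsilon^2$ and $B\asymp K$, we have $\sqrt{JKB}\asymp\varepsilon K$. Both error terms must therefore be $o(\varepsilon K)$ uniformly in the joint limit.

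The $O(\varepsilon^2K)$ Taylor remainder is fine, since $\E[G^2]=1$ keeps it summable over the selected set; it is at most $C\varepsilon^2\sum_i G_i^2=O(\varepsilon^2K)$ in expectation.

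The order-statistic error $\varepsilon\cdot o(K)$ is also $o(\varepsilon K)$. This is fine because the trimmed-sum convergence depends only on $K$ and not on $\varepsilon$, which is exactly what lets the limits be taken jointly.

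The delicate point is the third-moment remainder in the KL expansion. It requires a bound of the form $|\KL-\text{quadratic}|\le C|\eta-p|^3$ that holds uniformly, including for large $|G|$. I would handle this by splitting on the event $\{\varepsilon|G|\le\delta\}$: on this event the cubic Taylor bound applies; off it, the KL term is at most $\log\frac{1}{\min(p,1-p)}$, and Markov's inequality with $\E|G|^3$ makes that contribution $O(\varepsilon^3)$.
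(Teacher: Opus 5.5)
Your proposal is correct and follows the paper's proof essentially step for step. The shared steps are: the reduction of the gain to $I_{\mathrm{ver}}\,\E[\sum_{i\in\cB}\eta(Z_i)]$ under decoupled claims, the first-order logistic expansion with an $O(\varepsilon^2 K)$ remainder, convergence of the top-$B$ average to $m_G(\alpha)$ (which the paper proves as Lemma~\ref{lem:top-alpha} via convergence in probability plus uniform integrability), and the KL expansion $J=\varepsilon^2p(1-p)/(2\ln 2)+o(\varepsilon^2)$. Your event-splitting treatment of the cubic KL remainder is a slightly more careful justification of the step that the paper dispatches by appealing only to $\E[|G|^3]<\infty$.
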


Theorem~\ref{thm:score-achievability} addresses the achievability gap.
It shows that a simple score-based policy attains the same square-root scaling as the converse.
The constant $m_G(\alpha)$ captures how much the policy can leverage the upper tail.

\begin{corollary}[Tight square-root scaling in a weak-screening regime]
\label{cor:tight-sqrt}
Under the assumptions of Theorem~\ref{thm:score-achievability}, let $D^\star(K,B)$ be the minimum Bayes log-loss over all policies that use budgets $(K,B)$.
Then, as $\varepsilon\to 0$ and $K\to\infty$ with fixed $\alpha\in(0,1)$ and $B=\lfloor \alpha K\rfloor$,
\begin{align}
\Hh(\Theta)-D^\star(K,B)
&\ge
I_{\mathrm{ver}}\big(Bp + c_G(p,\alpha)\sqrt{J K B}\big)\notag\\
&\qquad + o\!\big(\sqrt{J K B}\big),
\label{eq:inner-sqrt}
\\
\Hh(\Theta)-D^\star(K,B)
&\le
I_{\mathrm{ver}}\Big(Bp + \sqrt{\tfrac{\ln 2}{2}\,J K B}\Big).
\label{eq:outer-sqrt}
\end{align}
\end{corollary}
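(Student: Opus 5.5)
The corollary is a sandwich, and the plan is to obtain each side from a result already stated, handling the policy classes and the $\min$ in \eqref{eq:score-achievability} with care.

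\textbf{Upper bound \eqref{eq:outer-sqrt}.} This should come straight from Theorem~\ref{thm:ver-tradeoff}. The converse there holds for every policy that uses budgets $(K,B)$ in the sense of Definition~\ref{def:budgets}. Its right-hand side depends only on $(p,J,K,B,I_{\mathrm{ver}})$ and not on the policy. So it also bounds $\Hh(\Theta)-D^\star(K,B)$, since $D^\star$ is an infimum over those policies and the bound passes to the supremum of the gains. The decoupled-claim model of Assumption~\ref{asmp:decoupled-claims} lies within the scope of the converse, as stated in the scope paragraph of Section~\ref{sec:model}. Expanding $B\sqrt{\ln 2/2}\sqrt{JK/B}=\sqrt{(\ln 2/2)JKB}$ then gives \eqref{eq:outer-sqrt} exactly, with no asymptotics needed.

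\textbf{Lower bound \eqref{eq:inner-sqrt}.} The top-$B$ score policy of Theorem~\ref{thm:score-achievability} uses budgets $(K,B)$, so $D^\star(K,B)\le D(K,B)$ for that policy. Hence $\Hh(\Theta)-D^\star\ge \min\{\Hh(\Theta),\,I_{\mathrm{ver}}(Bp+c_G\sqrt{JKB})\}+o(\sqrt{JKB})$. The remaining task is to remove the $\min$.

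\textbf{Main obstacle: the $\min$ and the $Bp$ term.} Under decoupled claims, $\Hh(\Theta)=K\,\Hh(\Theta_1)$. Since $I_{\mathrm{ver}}=\I(\Theta_1;V\mid T=1)\le \Hh(\Theta_1)$, we have
\[
I_{\mathrm{ver}}Bp\le \alpha p K\,\Hh(\Theta_1)<\Hh(\Theta).
\]
The extra term $c_G\sqrt{JKB}$ is of order $\varepsilon K$ because $J=O(\varepsilon^2)$. So as $\varepsilon\to0$ we get
\[
I_{\mathrm{ver}}(Bp+c_G\sqrt{JKB})\le K\,\Hh(\Theta_1)\bigl(\alpha p+O(\varepsilon)\bigr)<\Hh(\Theta)
\]
for all small $\varepsilon$, using $\alpha p<1$. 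The $\min$ is therefore attained by the second argument.

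To justify $J=O(\varepsilon^2)$, I would expand the binary KL divergence. Writing $J=\E[d(\eta(Z)\,\|\,p)]$ and using $\eta(Z)=p+p(1-p)\varepsilon G+O(\varepsilon^2 G^2)$, the divergence is $\approx \frac{(\eta-p)^2}{2p(1-p)\ln 2}$. This gives
\[
J=\frac{p(1-p)\varepsilon^2}{2\ln 2}\bigl(1+o(1)\bigr).
\]
The $\E|G|^3<\infty$ assumption, together with boundedness of the logistic derivatives, controls the remainder.

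If $\Hh(\Theta_1)=\infty$ is allowed, the $\min$ is trivially the second argument. The only delicate point is that the $o(\sqrt{JKB})$ error term is uniform over the joint limit. I would inherit this directly from Theorem~\ref{thm:score-achievability} rather than reprove it.
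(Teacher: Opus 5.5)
Your proposal is correct and matches the paper's proof. The upper bound is the policy-independent converse of Theorem~\ref{thm:ver-tradeoff}, and the lower bound follows from $D^\star(K,B)\le D(K,B)$ for the top-$B$ policy of Theorem~\ref{thm:score-achievability}. Your explicit removal of the $\min$ is a step the paper's two-line proof silently omits, and your argument for it is valid: it uses $\Hh(\Theta)=K\,\Hh(\Theta_1)$, $I_{\mathrm{ver}}\le\Hh(\Theta_1)$ and $\alpha p+O(\varepsilon)<1$, and in the trivial case $\Hh(\Theta_1)=0$ both arguments vanish. A more direct route is also available, since the exact identity \eqref{eq:gain-Iver-app} in the proof of Theorem~\ref{thm:score-achievability} already makes the $\min$ vacuous.
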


Corollary~\ref{cor:tight-sqrt} gives a nontrivial inner and outer bound pair.
Both bounds have the same $\sqrt{J K B}$ scaling.

\subsection{The haystack regime and tail leverage}
\label{subsec:haystack}

In many discovery systems, verification is much more expensive than inspection.
This corresponds to $B\ll K$, namely $\alpha=B/K\to 0$.
We call this the haystack regime.
In this regime, the effect of screening enters through the tail mean $m_G(\alpha)$.

Using \eqref{eq:cG} and the relation $\alpha=B/K$, we rewrite the leading gain term in \eqref{eq:score-achievability} as
\begin{equation}
\label{eq:haystack-rewrite}
c_G(p,\alpha)\,\sqrt{J K B}
\;=\;
\sqrt{2\ln 2\,p(1-p)}\;m_G(\alpha)\,\sqrt{J}\,B.
\end{equation}
Equation~\eqref{eq:haystack-rewrite} shows that expanding attention $K$ is valuable only if $m_G(B/K)$ grows.

We now record two canonical examples.
They show that the tail shape of the score determines the leverage of attention.

\begin{proposition}[Gaussian scores yield logarithmic tail leverage]
\label{prop:gaussian-tail}
Assume that $G\sim\mathcal N(0,1)$.
Then, as $\alpha\to 0$,
\begin{equation}
\label{eq:gaussian-tail}
m_G(\alpha)
=
\sqrt{2\log\!\frac{1}{\alpha}}\,\big(1+o(1)\big).
\end{equation}
\end{proposition}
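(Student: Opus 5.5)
The plan is to compute the tail mean $m_G(\alpha)$ in closed form for the Gaussian, and then to get the asymptotics of the quantile $q_\alpha$ from standard Mills-ratio bounds. One convention point comes first. The $\sqrt{2\log(1/\alpha)}$ in \eqref{eq:gaussian-tail} has to be read with the natural logarithm, despite the paper's base-$2$ default. With $\log_2$ the leading constant would be off by a factor $\sqrt{\ln 2}$, and the $1+o(1)$ cannot absorb that. The argument below therefore targets $m_G(\alpha)=\sqrt{2\ln(1/\alpha)}\,(1+o(1))$.

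\textbf{Step 1 (exact formula).} Let $\phi$ and $\Phi$ be the standard normal density and CDF. Because $G$ is continuous, $\Prob(G\ge q_\alpha)=1-\Phi(q_\alpha)=\alpha$ exactly. Since $\phi'(g)=-g\phi(g)$, we get $\E[G\,\1\{G\ge q\}]=\int_q^\infty g\phi(g)\,dg=\phi(q)$. Hence
\[
m_G(\alpha)=\frac{\phi(q_\alpha)}{\alpha}=\frac{\phi(q_\alpha)}{1-\Phi(q_\alpha)},
\]
which is the inverse Mills ratio evaluated at $q_\alpha$. Also $q_\alpha\to\infty$ as $\alpha\to 0$.

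\textbf{Step 2 (inverse Mills ratio).} For $q>0$ we use the classical bounds
\[
\frac{q}{1+q^2}\,\phi(q)\le 1-\Phi(q)\le \frac{\phi(q)}{q}.
\]
Both follow from integration by parts or by differentiating the difference of the two sides. They give
\[
q\le \frac{\phi(q)}{1-\Phi(q)}\le q+\frac{1}{q},
\]
so $m_G(\alpha)=q_\alpha\,(1+O(q_\alpha^{-2}))=q_\alpha(1+o(1))$.

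\textbf{Step 3 (quantile asymptotics).} This is the only step needing care, and the work is keeping track of the lower-order terms. Take logarithms of $\alpha=1-\Phi(q_\alpha)$ and substitute the same Mills bounds:
\[
\ln\frac{1}{\alpha}=\frac{q_\alpha^2}{2}+\ln q_\alpha+\ln\sqrt{2\pi}+O(q_\alpha^{-2}).
\]
The terms $\ln q_\alpha$ and $\ln\sqrt{2\pi}$ are $o(q_\alpha^2)$. Therefore $q_\alpha^2=2\ln(1/\alpha)\,(1+o(1))$, and so $q_\alpha=\sqrt{2\ln(1/\alpha)}\,(1+o(1))$.

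\textbf{Conclusion.} Combining this with Step 2 gives \eqref{eq:gaussian-tail}. If sharper information is wanted, the relative error is of order $\ln\ln(1/\alpha)/\ln(1/\alpha)$, and it comes from the $\ln q_\alpha$ term in Step 3.
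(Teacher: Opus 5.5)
Your proof is correct and follows the paper's argument essentially step for step: the truncated-normal identity $m_G(\alpha)=\varphi(q_\alpha)/(1-\Phi(q_\alpha))$, then the Mills-ratio sandwich $q_\alpha\le m_G(\alpha)\le q_\alpha+1/q_\alpha$, then logarithms of the same Mills bounds to obtain $q_\alpha=\sqrt{2\ln(1/\alpha)}\,(1+o(1))$. Your caveat about the logarithm base is also right: the paper's own step $q_\alpha^2/2\le\log(1/\alpha)-\log q_\alpha+O(1)$, and its matching lower bound, hold only for the natural logarithm, so the proposition must be read with $\ln$ despite the paper's base-$2$ convention.
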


Proposition~\ref{prop:gaussian-tail} implies that, for light-tailed scores, expanding $K$ yields diminishing returns.
The leverage grows only as $\sqrt{\log(K/B)}$.

\begin{proposition}[A Pareto right tail yields polynomial tail leverage]
\label{prop:pareto-tail}
Let $X$ have a Pareto distribution with exponent $\nu>3$, namely
$\Prob(X\ge x)=x^{-\nu}$ for $x\ge 1$.
Define the centered and standardized score
\[
G:=\frac{X-\E[X]}{\sqrt{\mathrm{Var}(X)}}.
\]
Then, as $\alpha\to 0$,
\begin{equation}
\label{eq:pareto-tail}
m_G(\alpha)
=
\frac{\nu}{\nu-1}\cdot\frac{1}{\sqrt{\mathrm{Var}(X)}}\,\alpha^{-1/\nu}\,\big(1+o(1)\big).
\end{equation}
\end{proposition}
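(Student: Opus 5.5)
Because $G$ is an affine function of $X$, the computation reduces to a Pareto tail calculation for $X$ followed by a rescaling. Write $\mu:=\E[X]=\nu/(\nu-1)$ and $\sigma^2:=\mathrm{Var}(X)$. Both are finite since $\nu>3$; the condition $\nu>3$ also gives the finite third moment needed in Assumption~\ref{asmp:weak-screening}, although the argument below uses only $\nu>1$ and $\nu>2$.

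The affine map $x\mapsto (x-\mu)/\sigma$ is strictly increasing and $X$ is continuous, so the $(1-\alpha)$-quantile transforms accordingly: $q_\alpha=(x_\alpha-\mu)/\sigma$, where $x_\alpha$ solves $\Prob(X\ge x_\alpha)=\alpha$. For $\alpha\in(0,1)$ this gives $x_\alpha=\alpha^{-1/\nu}$. The event $\{G\ge q_\alpha\}$ coincides with $\{X\ge x_\alpha\}$ and has probability exactly $\alpha$. Substituting into \eqref{eq:tail-mean},
\begin{equation*}
m_G(\alpha)=\frac{1}{\sigma}\Big(\E[X\mid X\ge x_\alpha]-\mu\Big).
\end{equation*}

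Next we compute the conditional mean of the Pareto tail. It satisfies the usual self-similarity: conditioned on $X\ge x_0$, the ratio $X/x_0$ is again Pareto with exponent $\nu$. Directly,
\begin{equation*}
\E[X\,\1\{X\ge x_0\}]=\int_{x_0}^\infty x\,\nu x^{-\nu-1}\,dx=\frac{\nu}{\nu-1}\,x_0^{1-\nu},
\end{equation*}
so $\E[X\mid X\ge x_\alpha]=\frac{\nu}{\nu-1}\,x_\alpha=\frac{\nu}{\nu-1}\,\alpha^{-1/\nu}$. This formula is exact. Therefore
\begin{equation*}
m_G(\alpha)=\frac{1}{\sigma}\Big(\frac{\nu}{\nu-1}\,\alpha^{-1/\nu}-\frac{\nu}{\nu-1}\Big)
=\frac{\nu}{\nu-1}\cdot\frac{1}{\sigma}\,\alpha^{-1/\nu}\big(1-\alpha^{1/\nu}\big).
\end{equation*}

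As $\alpha\to0$ we have $\alpha^{1/\nu}\to0$, so the last factor is $1+o(1)$, which gives \eqref{eq:pareto-tail}. I see no genuine obstacle. The only points needing care are to check that the quantile convention (an infimum) matches the affine change of variables, which holds by continuity and strict monotonicity, and to record that the subtracted mean term $\mu/\sigma$ is lower order. For completeness one can note $\sigma^2=\frac{\nu}{(\nu-1)^2(\nu-2)}$, although the statement leaves $\sigma$ symbolic.
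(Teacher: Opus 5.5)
Your proposal is correct and follows the paper's proof essentially step by step. You map the $(1-\alpha)$-quantile through the increasing affine transform to $x_\alpha=\alpha^{-1/\nu}$, compute the truncated Pareto mean $\frac{\nu}{\nu-1}x_\alpha^{1-\nu}$ exactly, subtract $\mu$ and divide by $\sigma$, and note that $\mu/\sigma$ is negligible against $\alpha^{-1/\nu}$. Your factored form $\frac{\nu}{\nu-1}\sigma^{-1}\alpha^{-1/\nu}(1-\alpha^{1/\nu})$, which uses $\mu=\nu/(\nu-1)$, simply makes the $o(1)$ term explicit.
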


Proposition~\ref{prop:pareto-tail} implies a very different scaling.
For a heavy-tailed score in the Fr\'echet domain, attention yields polynomial leverage.
This gives a precise criterion for when massive screening is most effective.

Fig.~\ref{fig:tail-leverage} visualizes these two regimes by plotting the asymptotic tail leverage $m_G(\alpha)$ (with $\alpha=B/K$) as a function of the oversampling ratio $K/B$.

\input{figures/tail_leverage_effect}

The key message is that pushing $K/B$ ever higher only pays off when the score distribution has exploitable extremes: for Gaussian-like tails the benefit grows only logarithmically, whereas for Pareto-like tails it grows as a power law.

\begin{remark}[A fully solved benchmark]
\label{rem:benchmark}
Appendix~\ref{app:tight-benchmark} gives a canonical model in which the risk-resource region can be characterized exactly.
The boundary is expressed through an upper-tail functional of the score distribution.
This benchmark complements the global-$\Theta$ analysis in this section.
\end{remark}

%% file: figures/tail_leverage_effect.tex
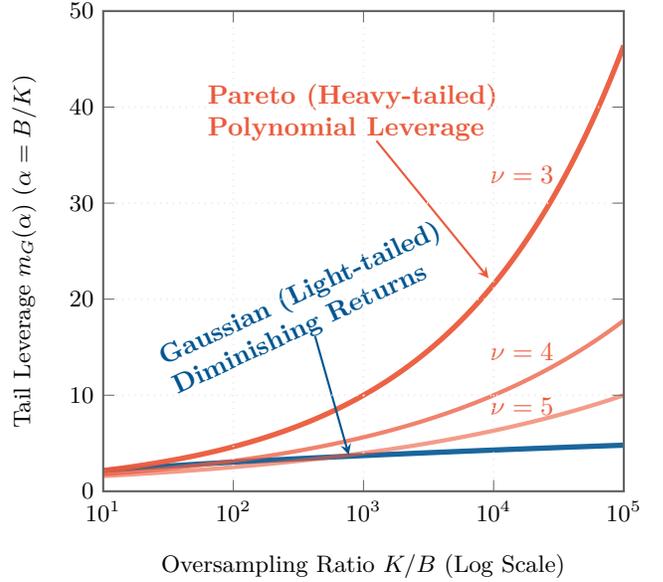
\begin{figure}[t]
\centering

\definecolor{techBlue}{RGB}{0, 85, 145}
\definecolor{techOrange}{RGB}{235, 90, 60}
\definecolor{gridGray}{RGB}{235, 235, 240}
\definecolor{axisGray}{RGB}{100, 100, 100}

\begin{tikzpicture}
    \begin{semilogxaxis}[
        width=0.96\linewidth,
        height=0.90\linewidth,
        font=\small, 
        axis on top,          
        axis line style={draw=axisGray, line width=0.8pt}, 
        grid=major,
        grid style={dotted, gridGray, line width=0.8pt}, 
        tick pos=both,        
        tick align=inside,    
        major tick style={draw=axisGray},
        xmin=10, xmax=100000,
        xlabel={Oversampling Ratio $K/B$ (Log Scale)},
        xmode=log,
        log basis x={10},
        xtick={10, 100, 1000, 10000, 100000},
        xticklabels={$10^1$, $10^2$, $10^3$, $10^4$, $10^5$},
        xlabel style={yshift=-0.2cm}, 
        ymin=0, ymax=50, 
        ylabel={Tail Leverage $m_G(\alpha)$ ($\alpha=B/K$)},
        ylabel style={yshift=0.2cm},
        legend style={draw=none, fill=none}
    ]

    \addplot[
        domain=10:100000, 
        samples=400, 
        color=techBlue, 
        line width=2pt, 
        opacity=0.9
    ]
    {sqrt(2 * ln(x))};
    

    \addplot[
        domain=10:100000,
        samples=400,
        color=techOrange,
        line width=1.6pt,
        opacity=0.60
    ]
    {x^(1/5)};

    \addplot[
        domain=10:100000,
        samples=400,
        color=techOrange,
        line width=1.6pt,
        opacity=0.75
    ]
    {x^(1/4)};

    \addplot[
        domain=10:100000,
        samples=400,
        color=techOrange,
        line width=2pt,
        opacity=0.95
    ]
    {x^(1/3)};

    
    \node[anchor=north west, color=techOrange, align=left, font=\bfseries, inner sep=0pt]
        (paretoLabel) at (rel axis cs: 0.20, 0.85) {Pareto (Heavy-tailed)\\Polynomial Leverage};

    \draw[-stealth, color=techOrange, thick, shorten >=3pt]
        (paretoLabel) -- (axis cs: 10000, {pow(10000,1/3)});

    \node[anchor=south west, color=techOrange, font=\bfseries]
        at (axis cs: 8000, {pow(30000,1/3)}) {$\nu=3$};
    \node[anchor=south west, color=techOrange, font=\bfseries]
        at (axis cs: 8000, {pow(26000,1/4)}) {$\nu=4$};
    \node[anchor=south west, color=techOrange, font=\bfseries]
        at (axis cs: 8000, {pow(17000,1/5)}) {$\nu=5$};

    \node[anchor=north west, color=techBlue, align=left, font=\bfseries, inner sep=0pt, rotate=25] 
        (gaussianLabel) at (axis cs: 25, 15) {Gaussian (Light-tailed)\\Diminishing Returns};
        
    \draw[-stealth, color=techBlue, thick, shorten >=3pt] 
        (gaussianLabel) -- (axis cs: 800, 3);

    \end{semilogxaxis}
\end{tikzpicture}
\caption{\textbf{Escaping the Gaussian trap: Tail leverage determines screening utility.}
The figure compares the asymptotic gain from massive screening ($K \gg B$) under light-tailed (Gaussian) versus heavy-tailed (Pareto with varying $\nu$) score distributions.
While Gaussian scores yield \emph{diminishing returns} (scaling logarithmically as $\sqrt{\ln K}$), heavy-tailed scores provide \emph{polynomial leverage} (scaling as $K^{1/\nu}$).
This illustrates the critical dichotomy in the haystack regime: expanding the screening budget $K$ is highly effective only when the score distribution admits exploitable extremes.}
\label{fig:tail-leverage}
\end{figure}

%% file: sections/conclusion_and_discussion.tex
\section{Discussion and Conclusion}
\label{sec:conclusion}

The post-generative information ecosystem has a new asymmetry. Producing and reshaping claims is cheap, while verifying them remains expensive. Modern pipelines can surface thousands of candidates through retrieval, heuristics, or lightweight model judgments, yet only a small subset can be investigated deeply with tools, experts, or reading. The binding constraint is often not access to text or the ability to phrase an answer, but the ability to allocate  attention to verification that turns plausible statements into warranted belief.

This paper formalizes that constraint through ACI. In each window, a decision-maker can screen many records at low cost and verify at most \(B\) of them at high cost after observing a screening statistic. Under Bayes log-loss, we measure performance by the maximum achievable reduction in posterior uncertainty per window, which we call \emph{epistemic throughput}. The main results characterize this throughput information-theoretically. The JaKoB scaling law isolates a simple interaction between screening quality \(J\), screening volume \(K\), and verification capacity \(B\): besides the linear baseline from verification, there is a leverage term of order \(\sqrt{JKB}\) (up to universal constants and the trivial ceiling \(\Hh(\Theta)\)). In sparse-verification regimes, leverage is governed by extreme values, which is why heavy-tailed score behavior can deliver polynomial amplification while light-tailed scores yield only logarithmic gains.

\subsection{Positioning: from reliability and semantics to truthfulness}
It is useful to compare ACI to established communication paradigms, because the distinction is about what is being optimized.

Traditional communication theory focuses on \emph{transmission reliability}. The core problem is to move symbols through a physical channel with limited bandwidth and noise. Bit rate and error probability are natural metrics because the message content is taken as given, and the challenge is preserving it through the medium.

Cognitive communication emphasizes \emph{resource adaptability}. The problem is to operate efficiently in a dynamic environment by sensing and exploiting time-varying resources, such as spectrum opportunities, under uncertainty and competition.

Semantic communication shifts attention to \emph{meaning consistency}. When many symbol strings can express the same intent, the objective becomes task success or semantic similarity rather than exact symbol recovery. The question is whether the receiver can act on what the sender meant.

The bottleneck motivating ACI is \emph{orthogonal} to all the three above. A claim can be transmitted perfectly in a resource-optimized way, interpreted exactly as intended, and still be false about the world. After large models became widely deployed, information streams contain an abundance of fluent, coherent statements whose truthfulness varies widely. When verification is scarce, the limiting factor is the ability to concentrate deep checks on the few items that can most improve the posterior. This is the regime in which epistemic throughput, rather than bit rate or semantic similarity, becomes the natural target.

\subsection{Toward a paradigm ``epistemic communication''}
These considerations motivate a possible next paradigm, which we call \emph{epistemic communication}. If this is a useful discipline, it should be defined by a distinct objective, a distinct metric, and a corresponding set of design questions.

\subsubsection{Objective and metric.}
Epistemic communication would focus on \emph{truthfulness} as a system-level goal: maximize the rate at which a receiver acquires verified, decision-relevant truths under attention constraints. Epistemic throughput offers a concrete metric because it measures posterior improvement under an operational loss (log-loss) and makes the verification budget explicit. Unlike a semantic metric, it distinguishes between statements that are merely consistent in meaning and statements that are supported by evidence strong enough to change a rational posterior.

\subsubsection{System boundary.}
Epistemic communication is inherently end-to-end. It spans how claims are produced and packaged, how weak signals are extracted cheaply, how verification actions are chosen, and how verification outputs are represented and shared as public artifacts. ACI is a minimal model that makes these components explicit. Screening corresponds to low-cost feature extraction and ranking; verification corresponds to expensive evidence acquisition; artifacts correspond to reusable proof objects, such as citations, provenance records, or standardized evaluation reports.

\subsubsection{Design principles suggested by ACI.}
The limit theorems in this paper suggest several principles that are likely to persist beyond the stylized assumptions used for analysis.

First, screening creates value only insofar as it increases \emph{selectivity}. In ACI, screening does not directly reveal \(\Theta\); it shapes which records get verified. In sparse-verification settings, average improvements in screening are often less important than the behavior of the top of the score distribution, because only the top few candidates are ever checked. This is exactly what the tail results capture: leverage is driven by rare, very high-scoring candidates.

Second, oversampling is a lever, but it has failure modes. The square-root term shows that increasing cheap screening volume can amplify scarce verification, but the amplification saturates quickly when scores are light-tailed. This yields a practical diagnostic: if doubling the screened pool does not materially improve the best few candidates, additional screening compute is unlikely to buy much epistemic progress. Conversely, engineering for heavy-tailed score behavior provides a concrete target: scoring pipelines should be designed to occasionally produce candidates that are far more verifiable and informative than typical ones.

Third, public artifacts are the medium of epistemic communication. Verification effort scales only if its outputs are cheap to reuse. This makes representation a first-class problem. Artifacts should be easy to consume, hard to counterfeit, and composable. Provenance metadata, standardized citation formats, cryptographic attestations, and structured argument summaries all fit naturally into this role. In ACI terms, better artifact design increases the downstream value of each verification action by enabling reuse across many decoders.

Finally, truthfulness is a systems property shaped by incentives and adversaries. In realistic ecosystems, the record stream is not passive. Producers may be rewarded for attention rather than accuracy, and adversaries can craft records whose screening features mimic truth. Epistemic communication therefore connects information theory to learning, mechanism design, and security: it asks how to align incentives and build auditability so that producing verifiable truth is the easiest way to succeed.

\subsection{Implications for retrieval-augmented and tool-using systems}
Retrieval-augmented generation and tool-use pipelines naturally fit the ACI template. Retrieval, query rewriting, and lightweight heuristics can surface many candidate snippets or hypotheses cheaply, while deep follow-up is limited by expensive tool calls, constrained context windows, human review, or time. The theory suggests evaluating screening stages by their contribution to \emph{verification yield}, not only by average ranking quality. In particular, the heavy-tail requirement points to an engineering goal: screening should sometimes produce ``clean hits'' that are easy to verify and highly informative once verified, rather than merely shifting average ranks.

The model also highlights the value of reusing verification work. Caching verified sources, publishing structured citations, and storing intermediate tool traces can turn a one-off verification action into a reusable artifact. This does not increase the raw verification budget \(B\), but it increases the epistemic return per verification by reducing repeated effort across users and across time.

\subsection{Limitations and open directions}
Our analysis is intentionally stylized, and its limitations outline a research agenda. First, the present model is window-based and largely i.i.d.; real systems are sequential and reflexive. Screening and verification decisions affect future data collection, user behavior, and even the content stream itself. Extending epistemic throughput to sequential settings would connect directly to active learning, bandits, and adaptive experimentation, where verification actions are chosen to maximize long-run posterior improvement.

Second, we focus on log-loss because it links optimal risk to conditional entropy and yields clean converse bounds. Other objectives, such as decision regret, calibration under abstention, or constrained false discovery, may induce different throughput notions and different limits. Third, verification costs are heterogeneous in practice and evidence sources are correlated. A richer model would allow multi-level verification ladders with varying costs and dependencies, which would better match tool chains that mix cheap checks with a few expensive audits.

Finally, the heavy-tail phenomenon raises a practical question: what architectural choices create heavy-tailed score distributions that are both useful and robust? Answering this would bridge the theory to concrete designs for ranking functions, tool orchestration, and evidence aggregation in the presence of strategic manipulation.

\subsection{Closing perspective}
When content is abundant and attention is scarce, the binding constraint is the rate at which systems can convert weak, cheap signals into strong, verified updates. ACI provides a language for this regime and a scaling law that quantifies its best-case behavior. More broadly, it motivates epistemic communication as a principled way to study truthfulness at scale, with epistemic throughput serving as a system-level metric for progress.

%% file: sections/appendices.tex
\begin{appendix}
    
\input{sections/tight_region}

\section{Proofs for Section~\ref{sec:theory}}
\label{sec:proofs}

We provide proofs of the main results.
All logarithms are base $2$.

\subsection{Proof of Lemma~\ref{lem:enrichment}}
\begin{proof}
Let $S$ be any randomized function of $Z$ and let $\alpha=\Prob(S=1)$.
By data processing, $\I(T;S)\le \I(T;Z)=J$.
We expand $\I(T;S)$ using conditional relative entropy:
\[
\I(T;S)
=
\alpha\,\KL\big(P_{T\mid S=1}\,\|\,P_T\big)
+
(1-\alpha)\,\KL\big(P_{T\mid S=0}\,\|\,P_T\big).
\]
Both KL terms are nonnegative, hence
\[
\KL\big(P_{T\mid S=1}\,\|\,P_T\big) \le \frac{1}{\alpha}\,\I(T;S) \le \frac{J}{\alpha}.
\]

We now relate KL divergence to total variation.
Pinsker's inequality for base-$2$ KL divergence states that
\[
\mathrm{TV}\big(P_{T\mid S=1},P_T\big)
\le
\sqrt{\frac{\ln 2}{2}\,\KL\big(P_{T\mid S=1}\,\|\,P_T\big)}.
\]
Since $T$ is binary, total variation equals the absolute difference of the probability of $T=1$:
\[
\mathrm{TV}\big(P_{T\mid S=1},P_T\big)
=
\abs{\Prob(T=1\mid S=1)-\Prob(T=1)}.
\]
Combining the last three displays yields
\[
\Prob(T=1\mid S=1)
\le
p + \sqrt{\frac{\ln 2}{2\alpha}\,J}.
\]
This is \eqref{eq:enrichment}.
\end{proof}

\subsection{Proof of Theorem~\ref{thm:ver-tradeoff}}
\begin{proof}
We consider $K$ inspected records indexed by $i\in\idx{K}$.
For record $i$, let $(T_i,Z_i,V_i)$ be distributed as in Assumptions~\ref{asmp:screening} and \ref{asmp:verification-channel}.
Let $S_i\in\{0,1\}$ indicate whether record $i$ is verified.
The policy may select the verified set using the entire vector $\bm{Z}=(Z_1,\dots,Z_K)$ and private randomness independent of $(T^K,Z^K)$.
The budget constraint $\sum_{i=1}^K S_i\le B$ holds almost surely.

For each record $i$, define the verification transcript
\[
Y_i:=\begin{cases}
\bot, & S_i=0,\\
(T_i,V_i), & S_i=1,
\end{cases}
\]
where $\bot$ is a fixed symbol, and let $\bm{Y}:=(Y_1,\dots,Y_K)$.
Under Bayes-optimal prediction, the expected log-loss equals
$D(K,B)=\Hh(\Theta\mid \bm{Z},\bm{Y})$.
Therefore,
\[
\Hh(\Theta)-D(K,B)=\I(\Theta;\bm{Z},\bm{Y}).
\]
Assumption~\ref{asmp:screening} implies that $(T^K,Z^K)$ is independent of $\Theta$, hence $\I(\Theta;\bm{Z})=0$ and
\[
\Hh(\Theta)-D(K,B)=\I(\Theta;\bm{Y}\mid \bm{Z}).
\]

We upper bound $\I(\Theta;\bm{Y}\mid \bm{Z})$ by a chain-rule argument that keeps the global latent state.
By the chain rule,
\[
\I(\Theta;\bm{Y}\mid \bm{Z})
=
\sum_{i=1}^K \I\big(\Theta;Y_i\mid \bm{Z},Y^{i-1}\big).
\]
If $S_i=0$ then $Y_i$ is constant and the term is zero.
If $S_i=1$ then $Y_i=(T_i,V_i)$.
Since $T_i$ is independent of $\Theta$, we have
\[
\I\big(\Theta;Y_i\mid \bm{Z},Y^{i-1},S_i=1\big)
=
\I\big(\Theta;V_i\mid T_i,\bm{Z},Y^{i-1},S_i=1\big).
\]

Let $W_i:=(\bm{Z},Y^{i-1},S_i=1)$.
By the record-generation model in Section~\ref{sec:model} together with Assumption~\ref{asmp:verification-channel}, we have the conditional independence
$V_i\perp\!\!\!\perp W_i\mid (\Theta,T_i)$.
Using the identity
$\I(\Theta;V_i\mid T_i,W_i)=\Hh(V_i\mid T_i,W_i)-\Hh(V_i\mid \Theta,T_i)$,
we obtain
\[
\I(\Theta;V_i\mid T_i,W_i)
\le
\Hh(V_i\mid T_i)-\Hh(V_i\mid \Theta,T_i)
=
\I(\Theta;V_i\mid T_i).
\]
Therefore,
\[
\I(\Theta;\bm{Y}\mid \bm{Z})
\le
\sum_{i=1}^K \E\bracks{S_i\,\I(\Theta;V_i\mid T_i)}.
\]
If $T_i=0$ then $V_i$ is independent of $\Theta$ and the mutual information term is $0$.
If $T_i=1$ the term equals $I_{\mathrm{ver}}$ by \eqref{eq:iver}.
Thus
\[
\I(\Theta;\bm{Y}\mid \bm{Z})
\le
I_{\mathrm{ver}}\sum_{i=1}^K \E\bracks{S_i\,\1\{T_i=1\}}.
\]

We now bound the expected number of informative verified records.
Let $I$ be a random index uniform on $\idx{K}$ and independent of all other variables.
Define $(T,Z):=(T_I,Z_I)$ and $S:=S_I$.
Then
\[
\alpha:=\Prob(S=1)=\E[S]=\frac{1}{K}\sum_{i=1}^K \E[S_i]\le \frac{B}{K}.
\]
By the definition of the random index $I$, we have
\[
\E\bracks{S\,\1\{T=1\}}=\frac{1}{K}\sum_{i=1}^K \E\bracks{S_i\,\1\{T_i=1\}}.
\]
By the definition of conditional probability,
\begin{multline*}
\E\bracks{S\,\1\{T=1\}}=\Prob(S=1)\Prob(T=1\mid S=1) \\ 
=\alpha\Prob(T=1\mid S=1).
\end{multline*}
Combining yields
\[
\sum_{i=1}^K \E\bracks{S_i\,\1\{T_i=1\}}=K\alpha\,\Prob(T=1\mid S=1).
\]

We bound $\Prob(T=1\mid S=1)$ in terms of $J=\I(T;Z)$.
Since $\bm{Z}_{-I}$ is independent of $(T,Z)$, the Markov chain
\[
T\; -\; Z\; -\; (S,\bm{Z}_{-I})
\]
holds.
By data processing, $\I(T;S)\le \I(T;Z)=J$.
Lemma~\ref{lem:enrichment} applies and yields
\[
\Prob(T=1\mid S=1)
\le
p + \sqrt{\frac{\ln 2}{2\alpha}\,J}.
\]
Combining gives
\[
\I(\Theta;\bm{Y}\mid \bm{Z})
\le
I_{\mathrm{ver}}\,K\alpha\Bigg(p + \sqrt{\frac{\ln 2}{2\alpha}\,J}\Bigg).
\]
Since $K\alpha\le B$, we have $K\alpha p\le Bp$.
For the square-root term we use $K\alpha\sqrt{1/\alpha}=K\sqrt{\alpha}\le \sqrt{KB}$.
This yields
\begin{multline*}
\Hh(\Theta)-D(K,B)
\le
I_{\mathrm{ver}}\Bigg(Bp + \sqrt{\frac{\ln 2}{2}\,J\,BK}\Bigg) \\
=
B\cdot I_{\mathrm{ver}}\Bigg(
p + \sqrt{\frac{\ln 2}{2}}\,\sqrt{\frac{JK}{B}}
\Bigg).
\end{multline*}
This is \eqref{eq:ver-tradeoff}.
\end{proof}

\subsection{Proof of Corollary~\ref{cor:ver-budget}}
\begin{proof}
Assume that a policy satisfies $\Hh(\Theta)-D(K,B)\ge \Delta$.
If $I_{\mathrm{ver}}=0$, then Theorem~\ref{thm:ver-tradeoff} implies $\Hh(\Theta)-D(K,B)=0$ for every policy.
Since $\Delta>0$, the premise cannot hold and the corollary is vacuous.
We therefore assume $I_{\mathrm{ver}}>0$ in the remainder of the proof.
By Theorem~\ref{thm:ver-tradeoff},
\[
\Delta
\le
I_{\mathrm{ver}}\Bigg(Bp + \sqrt{\frac{\ln 2}{2}\,J\,K\,B}\Bigg).
\]
We divide both sides by $I_{\mathrm{ver}}$ and define
\[
g:=\frac{\Delta}{I_{\mathrm{ver}}}
\qquad\text{and}\qquad
a:=\sqrt{\frac{\ln 2}{2}\,J\,K}.
\]
The inequality becomes $pB+a\sqrt{B}\ge g$.
Let $x:=\sqrt{B}$.
We rewrite the constraint as a quadratic inequality
\[
p x^2 + a x - g \ge 0.
\]
Since $p>0$, the polynomial is convex in $x$ and has a unique nonnegative root.
Therefore
\[
x \ge \frac{-a+\sqrt{a^2+4pg}}{2p}.
\]
Squaring both sides yields
\[
B \ge \frac{1}{4p^2}\big(\sqrt{a^2+4pg}-a\big)^2.
\]
Substituting the definitions of $a$ and $g$ gives \eqref{eq:ver-budget}.
\end{proof}

\begin{lemma}[Averages of top-score samples]
\label{lem:top-alpha}
Let $G_1,G_2,\dots$ be i.i.d. copies of a random variable $G$ such that $\E[G^2]<\infty$.
Assume that $G$ has a continuous distribution.
Fix $\alpha\in(0,1)$ and set $B=\lfloor \alpha K\rfloor$.
Let $G_{(1)}\ge\cdots\ge G_{(K)}$ denote the order statistics of $(G_1,\dots,G_K)$.
Then, as $K\to\infty$,
\[
\frac{1}{B}\sum_{i=1}^B G_{(i)} \to m_G(\alpha)
\qquad\text{in probability,}
\]
where $m_G(\alpha)$ is defined in \eqref{eq:tail-mean}.
Moreover, the convergence also holds in $\mathsf{L}^1$.
\end{lemma}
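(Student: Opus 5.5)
The plan is to compare the empirical top-$B$ average with the population truncated mean at the fixed threshold $q_\alpha$. Continuity of $G$ gives $\Prob(G\ge q_\alpha)=\alpha$ exactly, so $m_G(\alpha)=\E[G\,\1\{G\ge q_\alpha\}]/\alpha$. We decompose
\[
\frac{1}{B}\sum_{i=1}^B G_{(i)}
=\frac{K}{B}\cdot\frac{1}{K}\sum_{i=1}^K G_i\1\{G_i\ge q_\alpha\}
+\frac{1}{B}\Big(\sum_{i=1}^B G_{(i)}-\sum_{i=1}^K G_i\1\{G_i\ge q_\alpha\}\Big).
\]
For the first term, the weak law of large numbers (valid since $\E|G|<\infty$) gives $\frac1K\sum_i G_i\1\{G_i\ge q_\alpha\}\to \alpha\, m_G(\alpha)$, and $K/B\to1/\alpha$. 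Hence the first term converges to $m_G(\alpha)$ in probability.

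The main obstacle is the correction term. Let $N_K:=\#\{i:G_i\ge q_\alpha\}$. Since $N_K\sim\mathrm{Bin}(K,\alpha)$, we have $|N_K-B|=O_P(\sqrt K)$. The two sums differ exactly by the $|N_K-B|$ order statistics lying between rank $\min(B,N_K)$ and rank $\max(B,N_K)$. Each of these lies between $G_{(B)}$ and $G_{(N_K)}$, i.e.\ in the band between the empirical $(1-\alpha)$-quantile and $q_\alpha$, or is within one index of it. We bound the difference by $|N_K-B|\cdot\max\{|G_{(B)}|,|G_{(N_K)}|,|q_\alpha|\}$. By the Glivenko--Cantelli theorem together with continuity, both $G_{(B)}$ and $G_{(N_K)}$ are tight; they converge to $q_\alpha$ when the quantile is unique, and in general they stay within the (bounded) flat interval of the CDF around $q_\alpha$. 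Concretely, we will argue that with high probability both lie in $[q_\alpha-1,q_\alpha']$, where $q_\alpha'$ is the corresponding upper quantile. The difference is then $O_P(\sqrt K)$, and dividing by $B\asymp K$ sends it to $0$ in probability.

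For $\mathsf L^1$ convergence we will show uniform integrability of $A_K:=\frac1B\sum_{i\le B}G_{(i)}$. The two-sided bound
\[
-\frac{1}{B}\sum_{i=1}^K|G_i|\;\le\; A_K\;\le\;\frac1B\sum_{i=1}^K|G_i|
\]
dominates $|A_K|$ by $(K/B)\cdot\frac1K\sum_i|G_i|$. The sample means $\frac1K\sum_i|G_i|$ form a uniformly integrable family because they converge in $\mathsf L^1$ to $\E|G|$ by the law of large numbers. Alternatively, $\E[G^2]<\infty$ gives a uniform second-moment bound via $(\frac1K\sum|G_i|)^2\le\frac1K\sum G_i^2$. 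Since $K/B$ is bounded for $K$ large, $(A_K)$ is uniformly integrable. Convergence in probability plus uniform integrability then yields $\mathsf L^1$ convergence by Vitali's theorem.
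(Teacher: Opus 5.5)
Your proof is correct. For convergence in probability it takes a different route from the paper, while your $\mathsf{L}^1$ step is essentially the paper's.

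The paper fixes $\delta>0$ and applies the strong law to the counts of samples above $q_\alpha+\delta$ and above $q_\alpha-\delta$. It uses these to sandwich the top-$B$ average between two truncated empirical means, then lets $\delta\downarrow 0$ by dominated convergence and continuity of $G$. You instead compare directly with the fixed threshold $q_\alpha$. The samples $\ge q_\alpha$ are exactly the top $N_K$ order statistics, so the error is a sum of $|N_K-B|=O_P(\sqrt K)$ order statistics. Each of these is bounded in absolute value by $\max\{|G_{(B)}|,|q_\alpha|\}=O_P(1)$.

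Your route gives a single limit instead of a $\delta$-double limit, and an explicit $O_P(K^{-1/2})$ rate for the boundary correction. It is also more robust than the sandwich as written:
\begin{enumerate}
\item The paper's sandwich tacitly needs the boundary samples in $(q_\alpha-\delta,q_\alpha+\delta)$ to be nonnegative. This fails when $q_\alpha\le 0$, e.g.\ $\alpha\ge 1/2$ for symmetric $G$.
\item It also needs the strict inequality $\Prob(G\ge q_\alpha+\delta)<\alpha$. This fails if the CDF is flat just to the right of $q_\alpha$.
\end{enumerate}
Your absolute-value bound and your explicit treatment of a flat quantile interval avoid both problems.

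One small slip: the high-probability interval for $G_{(B)}$ should be $[q_\alpha-1,\,q_\alpha'+1]$, not $[q_\alpha-1,\,q_\alpha']$. When the quantile is unique, $G_{(B)}$ exceeds $q_\alpha$ with non-vanishing probability. You only use tightness, so nothing breaks.

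For $\mathsf{L}^1$, both arguments rest on uniform integrability plus Vitali. The paper gets $\E[A_K^2]\le (K/B)\,\E[G^2]$ by Cauchy--Schwarz on the top-$B$ squares. Your first argument dominates $|A_K|$ by $(K/B)\cdot\frac{1}{K}\sum_i|G_i|$ and uses uniform integrability of sample means. That needs only $\E|G|<\infty$, and so does your convergence-in-probability step, so your proof actually establishes the lemma under a first-moment assumption.
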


\subsection{Proof of Lemma~\ref{lem:top-alpha}}
\begin{proof}
We first prove convergence in probability.
Fix $\alpha\in(0,1)$ and set $B=\lfloor \alpha K\rfloor$.
Let $q_\alpha$ be a $(1-\alpha)$-quantile of $G$ as in \eqref{eq:tail-mean}.

Fix $\delta>0$.
Since $q_\alpha$ is a quantile, we have $\Prob(G\ge q_\alpha+\delta)<\alpha$ and $\Prob(G>q_\alpha-\delta)>\alpha$.
Define the counting variables
\[
N_+ := \sum_{i=1}^K \1\{G_i\ge q_\alpha+\delta\},
\qquad
N_- := \sum_{i=1}^K \1\{G_i> q_\alpha-\delta\}.
\]
By the strong law of large numbers, $N_+/K\to \Prob(G\ge q_\alpha+\delta)$ and $N_-/K\to \Prob(G>q_\alpha-\delta)$ almost surely.
Therefore, the event $\{N_+<B<N_-\}$ holds with probability tending to $1$.

On this event, every sample with $G_i\ge q_\alpha+\delta$ must appear among the top $B$ values, and every top-$B$ sample must satisfy $G_i>q_\alpha-\delta$.
This yields the sandwich bound
\begin{align*}
\frac{1}{B}\sum_{i=1}^K G_i\,\1\{G_i\ge q_\alpha+\delta\}
&\le
\frac{1}{B}\sum_{i=1}^B G_{(i)} \\
&\le
\frac{1}{B}\sum_{i=1}^K G_i\,\1\{G_i> q_\alpha-\delta\},
\end{align*}
with probability tending to $1$.

Since $\E[G^2]<\infty$, the random variables $G\,\1\{G\ge q_\alpha+\delta\}$ and $G\,\1\{G> q_\alpha-\delta\}$ are integrable.
The strong law gives
\[
\frac{1}{K}\sum_{i=1}^K G_i\,\1\{G_i\ge q_\alpha+\delta\}
\to
\E\bracks{G\,\1\{G\ge q_\alpha+\delta\}}
\qquad\text{a.s.}
\]
and the same for the upper bracket.
Since $K/B\to 1/\alpha$, we obtain almost sure limits for the two normalized sums by multiplying by $K/B$.

We now let $\delta\downarrow 0$.
As $\delta\downarrow 0$, we have the pointwise convergences
\begin{align*}
\1\{G\ge q_\alpha+\delta\} & \to \1\{G>q_\alpha\}, \\
\1\{G>q_\alpha-\delta\}    & \to \1\{G\ge q_\alpha\}.
\end{align*}
Since $\E[\abs{G}]<\infty$ and the integrands are dominated by $\abs{G}$, dominated convergence yields
\begin{align*}
\E\bracks{G\,\1\{G\ge q_\alpha+\delta\}} & \to \E\bracks{G\,\1\{G>q_\alpha\}}, \\
\E\bracks{G\,\1\{G>q_\alpha-\delta\}} & \to \E\bracks{G\,\1\{G\ge q_\alpha\}}.
\end{align*}
Because $G$ has a continuous distribution, $\Prob(G=q_\alpha)=0$, hence
$\E\bracks{G\,\1\{G>q_\alpha\}}=\E\bracks{G\,\1\{G\ge q_\alpha\}}$.
Therefore, both limits coincide and equal $\E\bracks{G\,\1\{G\ge q_\alpha\}}$.
This implies that the sandwich bounds converge to $m_G(\alpha)$.
Hence $\frac{1}{B}\sum_{i=1}^B G_{(i)}\to m_G(\alpha)$ in probability.

We next show convergence in $\mathsf{L}^1$.
By Cauchy--Schwarz,
\[
\abs{\frac{1}{B}\sum_{i=1}^B G_{(i)}}
\le
\sqrt{\frac{1}{B}\sum_{i=1}^B G_{(i)}^2}.
\]
Since the top-$B$ samples are a subset of all $K$ samples, we have
$\sum_{i=1}^B G_{(i)}^2 \le \sum_{i=1}^K G_i^2$.
Taking expectations yields
\[
\E\bracks{\frac{1}{B}\sum_{i=1}^B G_{(i)}^2}
\le
\frac{K}{B}\,\E[G^2]
=
\frac{1}{\alpha}\,\E[G^2].
\]
Hence $\{\frac{1}{B}\sum_{i=1}^B G_{(i)}\}_{K\ge 1}$ is uniformly integrable.
Convergence in probability together with uniform integrability implies $\mathsf{L}^1$ convergence.
\end{proof}

\subsection{Proof of Theorem~\ref{thm:score-achievability}}
\begin{proof}
We consider $K$ i.i.d. records indexed by $i\in\idx{K}$.
For record $i$, let $(T_i,Z_i)$ satisfy Assumption~\ref{asmp:screening}.
If record $i$ is verified, the decoder observes $(T_i,V_i)$ as in Assumption~\ref{asmp:verification-channel}.
We write $\eta_i:=\Prob(T_i=1\mid Z_i)$.
By Assumption~\ref{asmp:weak-screening},
\[
\ln\frac{\eta_i}{1-\eta_i}
=
\ln\frac{p}{1-p}+\varepsilon G_i,
\]
where $G_1,\dots,G_K$ are i.i.d. copies of $G$.
The map $g\mapsto \ln\frac{\eta}{1-\eta}$ is strictly increasing in $\eta\in(0,1)$.
Therefore $\eta_i$ is a strictly increasing function of $G_i$.
The policy verifies the $B=\lfloor \alpha K\rfloor$ largest scores $\eta_i$, which is equivalent to verifying the $B$ largest values among $G_1,\dots,G_K$.
We denote the verified index set by $\cB$.

Under Bayes-optimal prediction, observing the full transcript is equivalent to observing
$(Z^K,\cB,T_{\cB},V_{\cB})$, where $Z^K:=(Z_1,\dots,Z_K)$ and $\cB\subseteq\idx{K}$ is the verified index set.
Therefore,
\[
D(K,B)=\Hh\big(\Theta\mid Z^K,\cB,\,T_{\cB},\,V_{\cB}\big)
\]
and
\[
\Hh(\Theta)-D(K,B)=\I\big(\Theta;Z^K,\cB,\,T_{\cB},\,V_{\cB}\big).
\]

Assumption~\ref{asmp:screening} implies that $(T^K,Z^K)$ is independent of $\Theta$.
For any fixed $\varepsilon>0$, since $G$ has a continuous distribution, the scores are distinct with probability one.
Therefore the top-$B$ verification set $\cB$ is a deterministic function of $Z^K$.
Hence $(Z^K,\cB,T_{\cB})$ is independent of $\Theta$, and
\[
\Hh(\Theta)-D(K,B)=\I\big(\Theta;V_{\cB}\mid Z^K,\cB,T_{\cB}\big).
\]

Under Assumptions~\ref{asmp:verification-channel} and \ref{asmp:decoupled-claims}, conditioned on $(\Theta,T^K)$,
the verification outputs are generated without using $Z^K$.
Since $\Theta$ is also independent of $Z^K$, this implies that, conditioned on $(\cB,T_{\cB})$,
the pair $(\Theta,V_{\cB})$ is independent of $Z^K$.
Therefore
\[
\I\big(\Theta;V_{\cB}\mid Z^K,\cB,T_{\cB}\big)
=
\I\big(\Theta;V_{\cB}\mid \cB,T_{\cB}\big).
\]

We now expand $\I(\Theta;V_{\cB}\mid \cB,T_{\cB})$.
By Assumption~\ref{asmp:decoupled-claims}, conditioned on $(\cB,T_{\cB})$, the variables $V_i$ are independent across $i\in\cB$.
Moreover, $V_i$ depends on $\Theta$ only through $\Theta_i$.
Therefore
\begin{align}
\I\big(\Theta;V_{\cB}\mid \cB,T_{\cB}\big)
&=\Hh(V_{\cB}\mid \cB,T_{\cB})-\Hh(V_{\cB}\mid \Theta,\cB,T_{\cB})\notag\\
&=\sum_{i\in\cB}\Hh(V_i\mid T_i)-\sum_{i\in\cB}\Hh(V_i\mid \Theta_i,T_i)\notag\\
&=\sum_{i\in\cB}\I\big(\Theta_i;V_i\mid T_i\big).
\end{align}
If $T_i=0$ the term is $0$.
If $T_i=1$ the term equals $I_{\mathrm{ver}}$ by \eqref{eq:iver}.
Thus
\begin{equation}
\label{eq:gain-Iver-app}
\Hh(\Theta)-D(K,B)=I_{\mathrm{ver}}\cdot \E\bracks{\sum_{i\in\cB}\1\{T_i=1\}}.
\end{equation}
We also have the trivial upper bound $\Hh(\Theta)-D(K,B)\le \Hh(\Theta)$.
Combining yields
\begin{equation}
\label{eq:gain-min-app}
\Hh(\Theta)-D(K,B)
\ge
\min\Big\{\Hh(\Theta),\;I_{\mathrm{ver}}\cdot \E\bracks{\sum_{i\in\cB}\1\{T_i=1\}}\Big\}.
\end{equation}

We now lower bound $\E\big[\sum_{i\in\cB}\1\{T_i=1\}\big]$.
By the tower property,
\[
\E[\1\{i\in\cB\}T_i]=\E\big[\E[T_i\mid Z_i]\,\1\{i\in\cB\}\big]=\E[\eta_i\,\1\{i\in\cB\}].
\]
Summing over $i$ gives
\[
\E\Big[\sum_{i\in\cB}\1\{T_i=1\}\Big]=\sum_{i=1}^K \E[\eta_i\,\1\{i\in\cB\}].
\]
We next relate $\eta_i$ to $G_i$.
Let $\sigma(x):=(1+e^{-x})^{-1}$ be the logistic function and let $s:=\ln\frac{p}{1-p}$.
By Assumption~\ref{asmp:weak-screening}, we have $\eta_i=\sigma(s+\varepsilon G_i)$.
Since $\sigma(s)=p$ and $\sigma'(s)=p(1-p)$, Taylor's theorem gives
\[
\eta_i
=
p+\varepsilon p(1-p)G_i+R_i,
\]
where the remainder satisfies $\abs{R_i}\le c_0\varepsilon^2 G_i^2$ for a constant $c_0$ that depends only on $p$.

Using the tower property and $\E[T_i\mid Z_i]=\eta_i$, we have
\[
\E\Big[\sum_{i\in\cB}\1\{T_i=1\}\Big]=\sum_{i=1}^K \E[\eta_i\,\1\{i\in\cB\}].
\]
Substituting the expansion of $\eta_i$ yields
\begin{multline*}
\E\Big[\sum_{i\in\cB}\1\{T_i=1\}\Big]
=
Bp  \\ 
+ \varepsilon p(1-p)\sum_{i=1}^K \E[G_i\,\1\{i\in\cB\}]  + \sum_{i=1}^K \E[R_i\,\1\{i\in\cB\}].
\end{multline*}
Let $G_{(1)}\ge\cdots\ge G_{(K)}$ denote the order statistics.
Since $\cB$ selects the top $B$ scores, $\sum_{i=1}^K G_i\,\1\{i\in\cB\}=\sum_{i=1}^B G_{(i)}$.
Lemma~\ref{lem:top-alpha} implies
\[
\frac{1}{B}\sum_{i=1}^B G_{(i)}\to m_G(\alpha)
\qquad\text{in probability as }K\to\infty.
\]
Thus
\[
\sum_{i=1}^K \E[G_i\,\1\{i\in\cB\}] = B\,m_G(\alpha)+o(B),
\qquad\text{as }K\to\infty.
\]
We now bound the remainder term.
Since $\sum_{i\in\cB} G_i^2\le \sum_{i=1}^K G_i^2$, we have
\[
\sum_{i=1}^K \E\bracks{G_i^2\,\1\{i\in\cB\}}\le K\E[G^2]=K.
\]
Using $\abs{R_i}\le c_0\varepsilon^2 G_i^2$ yields
\[
\abs{\sum_{i=1}^K \E[R_i\,\1\{i\in\cB\}]}
\le
c_0\varepsilon^2\sum_{i=1}^K \E\bracks{G_i^2\,\1\{i\in\cB\}}
\le
c_0\varepsilon^2 K.
\]
Since $B=\lfloor \alpha K\rfloor$ with fixed $\alpha\in(0,1)$ and $\varepsilon\to 0$, the bound $c_0\varepsilon^2 K$ is $o(\varepsilon B)$.
We obtain
\begin{equation}
\label{eq:exp-inf-app}
\E\Big[\sum_{i\in\cB}\1\{T_i=1\}\Big]
=
Bp + \varepsilon p(1-p)\,B\,m_G(\alpha)+o(\varepsilon B).
\end{equation}

We now relate $\varepsilon$ to $J=\I(T;Z)$.
For binary $T$,
\[
J=\I(T;Z)=\E\bracks{\KL\big(\mathrm{Bern}(\eta(Z))\,\|\,\mathrm{Bern}(p)\big)}.
\]
Let $\delta:=\eta(Z)-p$.
Under Assumption~\ref{asmp:weak-screening}, Taylor's theorem for the logistic function gives
$\delta=\varepsilon p(1-p)G+O(\varepsilon^2 G^2)$.
The binary relative entropy admits the expansion
\begin{equation}
\label{eq:kl-taylor-app}
\KL\big(\mathrm{Bern}(p+x)\,\|\,\mathrm{Bern}(p)\big)
=
\frac{x^2}{2\ln 2\,p(1-p)}+o(x^2).
\end{equation}
Since $\E[\abs{G}^3]<\infty$, we can take expectations in \eqref{eq:kl-taylor-app} with $x=\delta$.
This yields
\[
\begin{aligned}
J
&=\frac{\E[\delta^2]}{2\ln 2\,p(1-p)}+o(\varepsilon^2)\\
&=\frac{p(1-p)\,\varepsilon^2}{2\ln 2}+o(\varepsilon^2),
\qquad \varepsilon\to 0.
\end{aligned}
\]
Therefore
\[
\varepsilon p(1-p)=\sqrt{2\ln 2\,p(1-p)\,J}+o(\sqrt{J}).
\]

We substitute into \eqref{eq:exp-inf-app}.
Since $B=\lfloor \alpha K\rfloor$, we have $B\sqrt{J}=\sqrt{\alpha}\,\sqrt{J K B}\,\big(1+o(1)\big)$.
We obtain
\begin{align*}
\E\Big[\sum_{i\in\cB}\1\{T_i=1\}\Big]
&=
Bp\\
&\quad+\sqrt{2\ln 2\,p(1-p)\,\alpha}\;m_G(\alpha)\,\sqrt{J K B}\\
&\quad+o\big(\sqrt{J K B}\big).
\end{align*}
Combining with \eqref{eq:gain-min-app} yields \eqref{eq:score-achievability}.
\end{proof}

\subsection{Proof of Corollary~\ref{cor:tight-sqrt}}
\begin{proof}
The upper bound is Theorem~\ref{thm:ver-tradeoff}.
For the lower bound, $D^\star(K,B)$ is the minimum Bayes log-loss over all feasible policies.
It is therefore no smaller than the log-loss achieved by the score-based policy in Theorem~\ref{thm:score-achievability}.
\end{proof}

\subsection{Proof of Proposition~\ref{prop:gaussian-tail}}
\begin{proof}
For a continuous distribution, \eqref{eq:tail-mean} reduces to a conditional mean.
Let $q_\alpha$ be the $(1-\alpha)$-quantile of $G$.
For $G\sim\mathcal N(0,1)$,
\[
m_G(\alpha)=\E[G\mid G\ge q_\alpha].
\]
A standard identity for the truncated normal gives
\begin{equation}
\label{eq:trunc-normal-app}
\E[G\mid G\ge q]=\frac{\varphi(q)}{1-\Phi(q)},
\end{equation}
where $\varphi(q)=\frac{1}{\sqrt{2\pi}}e^{-q^2/2}$ and $\Phi$ is the standard normal cdf.

We bound $1-\Phi(q)$ in terms of $\varphi(q)$ for $q>0$.
Using integration by parts and the identity $\varphi'(u)=-u\,\varphi(u)$ yields the upper bound
\begin{equation}
\label{eq:mills-upper-app}
1-\Phi(q)\le \frac{\varphi(q)}{q}.
\end{equation}
A second integration by parts gives the lower bound
\begin{equation}
\label{eq:mills-lower-app}
1-\Phi(q)\ge \frac{\varphi(q)}{q+1/q}.
\end{equation}
Substituting \eqref{eq:mills-upper-app} and \eqref{eq:mills-lower-app} into \eqref{eq:trunc-normal-app} yields
\[
q_\alpha \le m_G(\alpha)\le q_\alpha+\frac{1}{q_\alpha}.
\]
Therefore, it is enough to identify the leading growth of $q_\alpha$.

Since $\alpha=1-\Phi(q_\alpha)$, the bound \eqref{eq:mills-upper-app} gives
\[
\alpha\le \frac{1}{\sqrt{2\pi}}\,\frac{e^{-q_\alpha^2/2}}{q_\alpha}.
\]
Taking logarithms yields
\[
\frac{q_\alpha^2}{2}\le \log\frac{1}{\alpha} - \log q_\alpha + O(1).
\]
This implies $q_\alpha^2\le 2\log(1/\alpha)+O(\log\log(1/\alpha))$.
The bound \eqref{eq:mills-lower-app} gives a matching lower bound and yields
\[
q_\alpha=\sqrt{2\log\frac{1}{\alpha}}\,(1+o(1)).
\]
Since $m_G(\alpha)$ differs from $q_\alpha$ by at most $1/q_\alpha$, we obtain \eqref{eq:gaussian-tail}.
\end{proof}

\subsection{Proof of Proposition~\ref{prop:pareto-tail}}
\begin{proof}
Let $X$ have the Pareto tail $\Prob(X\ge x)=x^{-\nu}$ for $x\ge 1$, with $\nu>3$.
The mean and variance are
\[
\mu:=\E[X]=\frac{\nu}{\nu-1},
\qquad
\sigma^2:=\mathrm{Var}(X)=\frac{\nu}{(\nu-1)^2(\nu-2)}.
\]
Define $G=(X-\mu)/\sigma$.
For $\alpha\in(0,1)$, let $q_\alpha$ be the $(1-\alpha)$-quantile of $G$.
Since $G$ is an increasing function of $X$, we have
\[
\Prob(G\ge q)=\Prob(X\ge \sigma q+\mu).
\]
The $(1-\alpha)$-quantile of $X$ is $x_\alpha=\alpha^{-1/\nu}$.
Therefore $q_\alpha=(x_\alpha-\mu)/\sigma$.

For a continuous distribution, $m_G(\alpha)=\E[G\mid G\ge q_\alpha]$.
We compute
\begin{align*}
\E\big[G\,\1\{G\ge q_\alpha\}\big]
&=
\frac{1}{\sigma}\,\E\big[(X-\mu)\,\1\{X\ge x_\alpha\}\big] \\
&=
\frac{1}{\sigma}\,\Big(\E[X\,\1\{X\ge x_\alpha\}] - \mu\,\Prob(X\ge x_\alpha)\Big).
\end{align*}
For the Pareto density $f(x)=\nu x^{-\nu-1}$ on $x\ge 1$, we have
\begin{multline*}
\E[X\,\1\{X\ge x_\alpha\}]
=
\int_{x_\alpha}^{\infty} x f(x)\,dx \\
=
\nu\int_{x_\alpha}^{\infty} x^{-\nu}\,dx
=
\frac{\nu}{\nu-1}\,x_\alpha^{1-\nu}.
\end{multline*}
We also have $\Prob(X\ge x_\alpha)=x_\alpha^{-\nu}$.
Dividing by $\alpha=\Prob(X\ge x_\alpha)$ yields
\begin{multline*}
m_G(\alpha)
=
\E[G\mid G\ge q_\alpha] \\
=
\frac{1}{\sigma}\Big(\frac{\nu}{\nu-1}\,x_\alpha-\mu\Big)
=
\frac{1}{\sigma}\Big(\frac{\nu}{\nu-1}\,\alpha^{-1/\nu}-\mu\Big).
\end{multline*}
As $\alpha\to 0$, the additive constant $\mu/\sigma$ is negligible compared to $\alpha^{-1/\nu}$.
This gives \eqref{eq:pareto-tail}.
\end{proof}

\section{Proofs for Appendix~\ref{app:tight-benchmark}}
\label{sec:proofs-tight}

\subsection{Proof of Theorem~\ref{thm:optimal-hitrate}}
\label{app:proof-hitrate}
\begin{proof}
Fix any verification policy and its verification set $\cB\subseteq\idx{K}$.
We condition on the inspection outcomes $Z^K:=(Z_1,\dots,Z_K)$.
Under the benchmark model, types are conditionally independent given $Z^K$, and $\E[T_i\mid Z_i]=\eta(Z_i)=:\eta_i$.
Therefore,
\[
\E\Big[\sum_{i\in\cB} T_i \mid Z^K\Big]
=
\sum_{i\in\cB} \E[T_i\mid Z^K]
=
\sum_{i\in\cB} \eta_i.
\]
For fixed scores $\eta_1,\dots,\eta_K$ and a fixed budget $\abs{\cB}=B$, the sum is maximized by choosing the indices of the $B$ largest scores.
Let $\eta_{(1)}\ge\cdots\ge\eta_{(K)}$ denote the order statistics.
Then
\[
\sum_{i\in\cB} \eta_i \le \sum_{\ell=1}^{B} \eta_{(\ell)}.
\]
Taking expectations over $Z^K$ yields \eqref{eq:hit-rate-opt}.
The upper bound is achieved by verifying the $B$ largest scores.
\end{proof}

\subsection{Proof of Theorem~\ref{thm:tight-region}}
\label{app:proof-region}
\begin{proof}
Consider a verified record $i\in\cB$.
If $T_i=0$, then by Assumption~\ref{asmp:verification-sidechannel} the verification output is the constant symbol $V_i=v_0$.
The Bayes optimal prediction is the prior $P_\Theta$, and the expected log-loss equals $\Hh(\Theta)$.
If $T_i=1$, then $V_i$ is generated by $P_{V\mid\Theta}$.
The Bayes optimal prediction is the posterior $P(\Theta_i\mid V_i)$, and the expected log-loss equals $\Hh(\Theta\mid V)$.
By the definition of mutual information, $\Hh(\Theta\mid V)=\Hh(\Theta)-I_{\mathrm{ver}}^{\mathrm{bm}}$.
Therefore,
\[
\E\bracks{-\log q_i(\Theta_i)}
=
\Hh(\Theta)-I_{\mathrm{ver}}^{\mathrm{bm}}\,\E[T_i].
\]
Averaging over the $B$ verified records yields
\[
\bar{D}(K,B)
=
\Hh(\Theta)-\frac{I_{\mathrm{ver}}^{\mathrm{bm}}}{B}\,\E\Big[\sum_{i\in\cB} T_i\Big].
\]
Minimizing $\bar{D}(K,B)$ is equivalent to maximizing $\E[\sum_{i\in\cB} T_i]$.
Theorem~\ref{thm:optimal-hitrate} gives
\[
\E\Big[\sum_{i\in\cB} T_i\Big]
\le
\E\Big[\sum_{\ell=1}^{B} \eta_{(\ell)}\Big].
\]
Substituting into the previous display yields the converse $\bar{D}(K,B)\ge \bar{D}^\star(K,B)$ with $\bar{D}^\star$ defined in \eqref{eq:Dstar-finite}.
For achievability, verify the $B$ largest scores.
Then Theorem~\ref{thm:optimal-hitrate} holds with equality and the Bayes predictors achieve the stated risk.
\end{proof}

\subsection{Proof of Corollary~\ref{cor:single-letter}}
\label{app:proof-single-letter}
\begin{proof}
Let $\eta_1,\dots,\eta_K$ be i.i.d. samples of $\eta(Z)$ with a continuous distribution.
Fix $\alpha\in(0,1)$ and set $B=\lfloor \alpha K\rfloor$.
Define the population threshold $t_\alpha := Q_\eta(1-\alpha)$.
Continuity implies $\Prob(\eta=t_\alpha)=0$.
Standard order-statistics theory implies the empirical quantile converges almost surely:
\[
\eta_{(B)}\to t_\alpha,\qquad\text{as }K\to\infty.
\]
Since there are no ties almost surely,
\[
\frac{1}{K}\sum_{\ell=1}^{B} \eta_{(\ell)}
=
\frac{1}{K}\sum_{i=1}^{K} \eta_i\,\1\{\eta_i\ge \eta_{(B)}\}.
\]
The difference between this term and $\frac{1}{K}\sum_{i=1}^K \eta_i\,\1\{\eta_i\ge t_\alpha\}$ is controlled by the empirical measure of the interval between $\eta_{(B)}$ and $t_\alpha$.
This measure converges to $0$ almost surely.
Therefore,
\[
\frac{1}{K}\sum_{\ell=1}^{B} \eta_{(\ell)}
\to
\E\bracks{\eta\,\1\{\eta\ge t_\alpha\}}
=
\int_{1-\alpha}^{1} Q_\eta(u)\,du.
\]
This proves \eqref{eq:topB-quantile}.
Substituting into \eqref{eq:Dstar-finite} yields \eqref{eq:Dstar-single-letter}.
\end{proof}

\end{appendix}

%% file: sections/tight_region.tex
\section{A Fully Solved Benchmark: A Tight Risk-Resource Region}
\label{app:tight-benchmark}

Section~\ref{sec:theory} derives general converse bounds under budgets $(K,B)$.
It also gives an achievability result for a decoupled claim model.
In this appendix we complement that analysis with a benchmark that admits an exact characterization.
The benchmark is relevant when each informative record carries its own independent latent claim.
This decoupling removes global coupling and turns the problem into an optimal selection task.

\subsection{A canonical haystack model}
We consider a large pool of candidate records.
Each record has an unobserved binary type $T\in\{0,1\}$, where $T=1$ means the record is informative and $T=0$ means the record is uninformative.
We write $p:=\Prob(T=1)$.

Each informative record is associated with an independent latent claim $\Theta\in\cT$ with prior $P_\Theta$.
Different informative records have independent latent claims, and all latent claims share the same prior.

An inspection reveals a screening statistic $Z\in\cZ$.
The screening channel is specified by $P_{Z\mid T=1}$ and $P_{Z\mid T=0}$.
After observing $Z=z$, the posterior probability that the record is informative is
\begin{equation}
  \eta(z) := \Prob(T=1\mid Z=z).
  \label{eq:eta-def}
\end{equation}
We call $\eta(Z)$ the score.
The score is a sufficient statistic for selecting which inspected records to verify.

A verification reveals a pair $(T,V)$, where $V\in\cV$ is a verification output.
We assume the following verification model.

\begin{assumption}[Verification as a side channel]
\label{asmp:verification-sidechannel}
If $T=0$, then $V$ is a fixed symbol $v_0$ and is independent of $\Theta$.
If $T=1$, then $V$ is generated according to a channel $P_{V\mid \Theta}$.
\end{assumption}

We will return to this benchmark at the end of the appendix with a finite-length simulation that validates the resulting boundary and its weak-screening approximation (Fig.~\ref{fig:finite-length-validation}).

Under log-loss, the Bayes risk for an informative verified record equals $\Hh(\Theta\mid V)$.
For an uninformative verified record it equals $\Hh(\Theta)$.
It is convenient to define the information contribution of verification as
\begin{equation}
  I_{\mathrm{ver}}^{\mathrm{bm}} := \I(\Theta;V),
  \label{eq:Iver-bm}
\end{equation}
where the mutual information is computed under $P_\Theta P_{V\mid\Theta}$.
In this benchmark, $I_{\mathrm{ver}}^{\mathrm{bm}}$ coincides with $I_{\mathrm{ver}}$ from \eqref{eq:iver}.

In one window, we inspect $K$ records and then verify exactly $B$ of the inspected records, where $1\le B\le K$.
The inspection outcomes are i.i.d.\ samples $(Z_1,\dots,Z_K)$ drawn from the mixture distribution induced by $T$ and $P_{Z\mid T}$.

A verification policy maps the scores $\eta_1,\dots,\eta_K$ to a verification set $\cB\subseteq\idx{K}$ with $\abs{\cB}=B$.
We allow randomized policies.

\subsection{Benchmark risk-resource region}
We evaluate performance by the average log-loss over the $B$ verified records.
Let $q_i(\cdot)$ denote the predicted distribution for the latent claim of verified record $i$.
The benchmark risk of a policy is
\begin{equation}
  \bar{D}(K,B) := \frac{1}{B}\sum_{i\in\cB} \E\bracks{-\log q_i(\Theta_i)},
  \label{eq:risk-verified-bar}
\end{equation}
where $\Theta_i$ denotes the latent claim associated with record $i$ when $T_i=1$.

We say that a triple $(K,B,D)$ is achievable if there exists a policy using at most $K$ inspections and $B$ verifications such that $\bar{D}(K,B)\le D$.
The benchmark risk-resource region is the set of all achievable triples.

The next theorem characterizes the optimal expected number of informative verifications.

\begin{theorem}[Optimal hit rate under score-based selection]
\label{thm:optimal-hitrate}
Let $\eta_i:=\eta(Z_i)$ be the score of inspected record $i$, and let $\eta_{(1)}\ge \eta_{(2)}\ge \cdots \ge \eta_{(K)}$ denote the order statistics.
For any verification policy,
\begin{equation}
  \E\bracks{\sum_{i\in\cB} T_i}
  \le
  \E\bracks{\sum_{\ell=1}^{B} \eta_{(\ell)}}.
  \label{eq:hit-rate-opt}
\end{equation}
Moreover, the upper bound is achieved by verifying the $B$ records with the largest scores.
\end{theorem}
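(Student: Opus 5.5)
The plan is to condition on the full vector of screening outcomes $Z^K=(Z_1,\dots,Z_K)$, reduce the objective to a sum of scores, and then solve a trivial deterministic selection problem pointwise. The first step is to formalize what a policy may depend on. A (possibly randomized) verification policy chooses $\cB$ as a function of $Z^K$ (through the scores $\eta_1,\dots,\eta_K$) and of auxiliary randomness $\omega$, which is independent of $(T^K,Z^K)$. Since the records are i.i.d., the pairs $(T_i,Z_i)$ are independent across $i$. Hence
\[
\E[T_i\mid Z^K,\omega]=\E[T_i\mid Z_i]=\eta(Z_i)=\eta_i .
\]

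The second step uses this identity and the fact that $\cB$ is measurable with respect to $(Z^K,\omega)$. Writing $\sum_{i\in\cB}T_i=\sum_{i=1}^K \1\{i\in\cB\}T_i$ and applying the tower property term by term gives
\[
\E\Big[\sum_{i\in\cB}T_i\,\Big|\,Z^K,\omega\Big]=\sum_{i\in\cB}\eta_i .
\]

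The third step is a pointwise combinatorial bound. For fixed real numbers $\eta_1,\dots,\eta_K$ and any index set of size $\abs{\cB}=B$, we have
\[
\sum_{i\in\cB}\eta_i\le\sum_{\ell=1}^B\eta_{(\ell)} .
\]
This follows because the $\ell$-th largest element of $\{\eta_i:i\in\cB\}$ is at most $\eta_{(\ell)}$. If the budget allows $\abs{\cB}\le B$, the same bound holds since $\eta_i\ge 0$. Taking expectations over $(Z^K,\omega)$ then yields \eqref{eq:hit-rate-opt}.

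For achievability, I take $\cB$ to be the indices of the $B$ largest scores, breaking ties arbitrarily or at random. With this choice the pointwise inequality holds with equality for every realization, so the bound is attained.

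The only step needing care is the first one. It must be justified that conditioning on the whole vector $Z^K$ together with the policy randomness does not change the conditional mean of $T_i$ from $\eta_i$. This rests on independence across records and on the policy randomness being independent of the types. Everything else is elementary.
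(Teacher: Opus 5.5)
Your proof is correct and follows the paper's argument: condition on the screening outcomes, use the tower property to reduce the hit count to $\sum_{i\in\cB}\eta_i$, bound this pointwise by the top-$B$ sum, then take expectations, with equality for top-$B$ selection. Your version is slightly more careful than the paper's on two points. You condition jointly on $(Z^K,\omega)$ and check that $(T_i,Z_i)$ is independent of $(Z_{-i},\omega)$, which handles randomized policies properly, whereas the paper conditions only on $Z^K$. You also cover the case $\abs{\cB}\le B$ via $\eta_i\ge 0$.
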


\begin{figure*}[t]
\centering
\includegraphics[width=\textwidth]{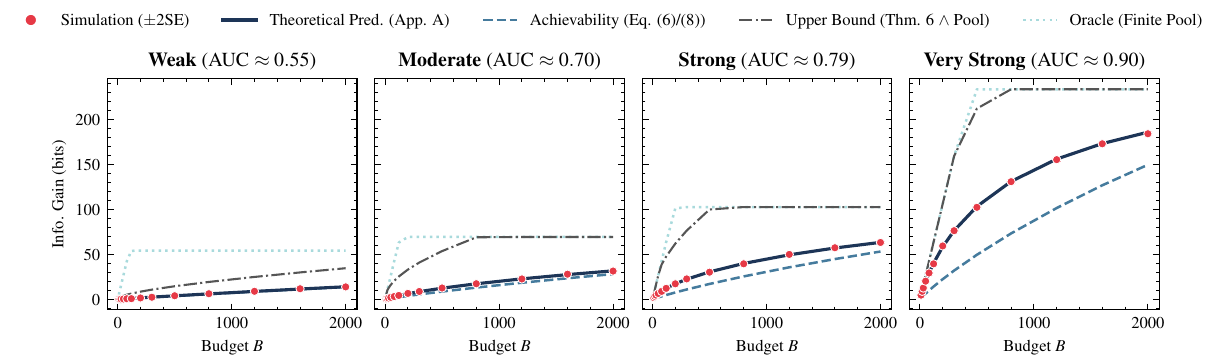}
\caption{\textbf{Finite-length validation of the JaKoB scaling law.}
We simulate screening scores from a logistic model and apply the top-$B$ verification policy with $K=10^4$, $\delta=0.1$, and $p_0=0.01$.
Markers (with $\pm2$SE error bars) report the empirical information gain (log-loss reduction, in bits).
The solid curve is the benchmark prediction from Theorem~\ref{thm:tight-region} (Eq.~\eqref{eq:Dstar-finite}); the dashed curve is the weak-screening approximation from Theorem~\ref{thm:score-achievability}/Corollary~\ref{cor:tight-sqrt}.
Dash-dot curves show the converse upper bound from Theorem~\ref{thm:ver-tradeoff} intersected with the finite-pool constraint, and the dotted curve indicates the finite-pool oracle.
Across four screening strengths (AUC $\approx 0.55, 0.70, 0.79, 0.90$), the empirical gains closely track the benchmark and remain below the theoretical and oracle ceilings, while the weak-screening law becomes conservative as screening strengthens. The code to reproduce this plot is available at \faGithub~{\small \url{https://github.com/youlei202/Attention-Constrained-Inference}}.}
\label{fig:finite-length-validation}
\end{figure*}

Theorem~\ref{thm:optimal-hitrate} reduces the verification stage to an optimal top-$B$ selection problem once we condition on inspection outcomes.
The boundary of the region therefore depends on a single functional of the score distribution.

\begin{theorem}[Tight tradeoff between risk and resource]
\label{thm:tight-region}
Under Assumption~\ref{asmp:verification-sidechannel}, the minimum achievable risk under $(K,B)$ equals
\begin{equation}
  \bar{D}^{\star}(K,B)
  =
  \Hh(\Theta)
  -
  \frac{I_{\mathrm{ver}}^{\mathrm{bm}}}{B}\,\E\bracks{\sum_{\ell=1}^{B} \eta_{(\ell)}}.
  \label{eq:Dstar-finite}
\end{equation}
Equivalently, a triple $(K,B,D)$ is achievable if and only if $D\ge \bar{D}^{\star}(K,B)$.
\end{theorem}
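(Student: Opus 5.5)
The plan is to decompose the benchmark risk record by record, so that minimizing $\bar{D}(K,B)$ becomes maximizing the expected number of informative verified records. Then Theorem~\ref{thm:optimal-hitrate} gives both the converse and the achievability.

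\textbf{Step 1: per-record Bayes risk.} Fix any (possibly randomized) policy and a verified index $i\in\cB$. Because latent claims are independent across records and independent of $(T^K,Z^K)$, the only information about $\Theta_i$ is carried by $(T_i,V_i)$. Conditioning on $\cB$ and $Z^K$ does not change the law of $(\Theta_i,V_i)$ given $T_i$, since $\cB$ depends only on scores and auxiliary randomness. Hence the Bayes-optimal predictor is $q_i=P_{\Theta\mid T_i,V_i}$. Its conditional risk is $\Hh(\Theta)$ when $T_i=0$, because $V_i=v_0$ is uninformative by Assumption~\ref{asmp:verification-sidechannel}. It is $\Hh(\Theta\mid V)=\Hh(\Theta)-I_{\mathrm{ver}}^{\mathrm{bm}}$ when $T_i=1$. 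Any other predictor does at least as badly, by Gibbs' inequality. Thus the conditional risk of record $i$ is at least $\Hh(\Theta)-I_{\mathrm{ver}}^{\mathrm{bm}}T_i$, with equality for Bayes predictors.

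\textbf{Step 2: sum over verified records.} Summing over $i\in\cB$ (with $\abs{\cB}=B$ fixed) and taking expectations gives, for every policy,
\[
\bar{D}(K,B)\ \ge\ \Hh(\Theta)-\frac{I_{\mathrm{ver}}^{\mathrm{bm}}}{B}\,\E\Big[\sum_{i\in\cB}T_i\Big].
\]
Equality holds when Bayes predictors are used.

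\textbf{Step 3: apply the hit-rate theorem.} Theorem~\ref{thm:optimal-hitrate} bounds the expectation by $\E[\sum_{\ell\le B}\eta_{(\ell)}]$, which yields $\bar{D}(K,B)\ge\bar{D}^\star(K,B)$. For achievability, use top-$B$ selection together with Bayes predictors. Then both inequalities are equalities, so $\bar{D}^\star(K,B)$ is attained.

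\textbf{Step 4: the region.} The region statement follows directly. If $D\ge\bar{D}^\star(K,B)$, the optimal policy certifies achievability. If $D<\bar{D}^\star(K,B)$, the converse rules out every policy with exactly $B$ verifications.

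The one point needing care is "at most $K$ inspections, $B$ verifications". Fewer inspections shrink the order statistics in expectation, so it suffices to use all $K$ records. I would handle this by noting that a policy using $K'<K$ records is a policy on $K$ records that ignores some of them. The verification count is fixed at $B$ because the risk is normalized by $B$.

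The main obstacle is Step 1: justifying that conditioning on the selection event and on the other records' data does not alter the posterior of $\Theta_i$ given $(T_i,V_i)$. I would argue this directly from the independence structure. The tuple $(\Theta_i,V_i)$ is conditionally independent of $(Z^K,\cB,\{(T_j,V_j)\}_{j\ne i})$ given $T_i$. This holds because $\cB$ is a function of $Z^K$ and independent randomness, and $Z_i$ influences $(\Theta_i,V_i)$ only through $T_i$.
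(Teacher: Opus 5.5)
Your proposal is correct and follows the paper's proof essentially step for step. Both arguments use the per-record risk $\Hh(\Theta)-I_{\mathrm{ver}}^{\mathrm{bm}}T_i$ for a verified record, average it over the $B$ verified records, and invoke Theorem~\ref{thm:optimal-hitrate} for both the converse and top-$B$ achievability. Your additions make explicit what the paper leaves implicit: Gibbs' inequality for non-Bayes predictors, the conditional independence of $(\Theta_i,V_i)$ from $(Z^K,\cB)$ given $T_i$, and fixing $|\cB|=B$ exactly, which the ``only if'' direction of the region statement genuinely requires.
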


Theorem~\ref{thm:tight-region} gives a complete benchmark characterization.
It also makes clear why upper-tail properties matter.
The boundary is expressed through the expected sum of the top-$B$ posterior scores among the $K$ inspected records.

\subsection{An asymptotic single-letter expression}
Equation~\eqref{eq:Dstar-finite} is exact for any finite $(K,B)$.
In the large-system regime the boundary admits a single-letter form.

Let $F_\eta$ be the cumulative distribution function of the score $\eta(Z)$, and let $Q_\eta(u):=\inf\{x: F_\eta(x)\ge u\}$ be the quantile function.
Fix a verification fraction $\alpha\in(0,1)$ and set $B=\lfloor \alpha K\rfloor$.

\begin{corollary}[Single-letter boundary for $K\to\infty$]
\label{cor:single-letter}
Assume $\eta(Z)$ has a continuous distribution.
As $K\to\infty$ with $B=\lfloor \alpha K\rfloor$,
\begin{equation}
  \frac{1}{K}\,\E\bracks{\sum_{\ell=1}^{B} \eta_{(\ell)}}
  \to
  \int_{1-\alpha}^{1} Q_\eta(u)\,du.
  \label{eq:topB-quantile}
\end{equation}
Consequently,
\begin{equation}
  \bar{D}^{\star}(K,\lfloor \alpha K\rfloor)
  =
  \Hh(\Theta)
  -
  \frac{I_{\mathrm{ver}}^{\mathrm{bm}}}{\alpha}\,\int_{1-\alpha}^{1} Q_\eta(u)\,du
  + o(1).
  \label{eq:Dstar-single-letter}
\end{equation}
\end{corollary}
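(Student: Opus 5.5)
The plan is to first reduce the statement to a law of large numbers for the normalized top-$B$ sum, and then pass from convergence in probability to convergence of expectations by uniform integrability. This mirrors the argument for Lemma~\ref{lem:top-alpha}, with the scores $\eta_i=\eta(Z_i)$ playing the role of the $G_i$. The scores are bounded in $[0,1]$, which makes the integrability step essentially free.

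\textbf{Step 1: reduce to a population tail integral.} Let $t_\alpha:=Q_\eta(1-\alpha)$. Since $\eta(Z)$ is continuous, $\Prob(\eta\ge t_\alpha)=\alpha$ exactly. The quantile transform $\eta\overset{d}{=}Q_\eta(U)$ with $U$ uniform on $(0,1)$ then gives
\[
\E\bracks{\eta\,\1\{\eta\ge t_\alpha\}}=\int_{1-\alpha}^{1}Q_\eta(u)\,du .
\]
So it suffices to show that $\frac1K\sum_{\ell=1}^B\eta_{(\ell)}\to \E[\eta\,\1\{\eta\ge t_\alpha\}]$ in probability.

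\textbf{Step 2: sandwich argument.} Fix $\delta>0$ and define the counts
\[
N_+:=\#\{i:\eta_i\ge t_\alpha+\delta\},\qquad N_-:=\#\{i:\eta_i>t_\alpha-\delta\}.
\]
By the strong law, $N_+/K\to\Prob(\eta\ge t_\alpha+\delta)$ and $N_-/K\to\Prob(\eta>t_\alpha-\delta)$. These limits are $\le\alpha$ and $\ge\alpha$ respectively, and the inequalities are strict wherever the distribution puts mass near $t_\alpha$. When they are not strict, the corresponding bracket below is exact anyway.

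On the resulting high-probability event,
\[
\frac1K\sum_i\eta_i\1\{\eta_i\ge t_\alpha+\delta\}\;\le\;\frac1K\sum_{\ell\le B}\eta_{(\ell)}\;\le\;\frac1K\sum_i\eta_i\1\{\eta_i>t_\alpha-\delta\}.
\]
There is an extra correction here because $B$ need not equal $N_\pm$ exactly. Bounding each $\eta$ by $1$, this correction is at most $\bigl|B-N_\pm\bigr|/K\cdot\max\eta$, which vanishes as $\delta\downarrow0$ after applying the strong law. Both brackets converge almost surely to $\E[\eta\1\{\eta\ge t_\alpha\pm\delta\}]$. Letting $\delta\downarrow0$ with dominated convergence, and using $\Prob(\eta=t_\alpha)=0$, closes the sandwich.

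\textbf{Step 3: from probability to expectation.} Because $0\le\frac1K\sum_{\ell\le B}\eta_{(\ell)}\le B/K\le1$, bounded convergence upgrades convergence in probability to convergence of expectations, which gives \eqref{eq:topB-quantile}.

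\textbf{Step 4: substitute into the finite-$(K,B)$ risk.} Plug into \eqref{eq:Dstar-finite} from Theorem~\ref{thm:tight-region}:
\[
\frac{1}{B}\E\Bigl[\sum_{\ell\le B}\eta_{(\ell)}\Bigr]=\frac KB\cdot\frac1K\E\Bigl[\sum_{\ell\le B}\eta_{(\ell)}\Bigr].
\]
Since $K/B\to1/\alpha$ and the second factor is bounded, the product converges to $\frac1\alpha\int_{1-\alpha}^1Q_\eta$. Multiplying by the constant $I_{\mathrm{ver}}^{\mathrm{bm}}$ yields \eqref{eq:Dstar-single-letter} with an $o(1)$ error.

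\textbf{Main obstacle.} The only delicate point is Step 2 at the quantile, specifically the case where $Q_\eta$ has a flat spot, i.e.\ the law of $\eta$ has a gap around $t_\alpha$. There the counts near the threshold do not pin down $B$ exactly. I would handle this as in Lemma~\ref{lem:top-alpha}: boundedness of $\eta$ ensures that the mismatch of at most $|B-N_\pm|$ terms contributes only $o(K)$ to the sum.
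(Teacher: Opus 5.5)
Your proposal is correct and follows essentially the paper's route. Both arguments compare the top-$B$ sum with a fixed-threshold empirical sum at $t_\alpha=Q_\eta(1-\alpha)$ and apply the strong law; the paper does this via $\eta_{(B)}\to t_\alpha$ and the vanishing empirical mass between $\eta_{(B)}$ and $t_\alpha$, while you use $t_\alpha\pm\delta$ brackets with a $|B-N_\pm|/K$ correction. Your version is if anything more careful. It does not rely on $\eta_{(B)}\to t_\alpha$, which fails when $F_\eta$ is flat at level $1-\alpha$ (a gap in the law, i.e.\ a jump of $Q_\eta$ rather than a flat spot of $Q_\eta$ as you wrote). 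It also makes explicit the bounded-convergence passage to expectations, which the paper leaves implicit.
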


Corollary~\ref{cor:single-letter} turns the benchmark region into a single-letter expression.
The integral in \eqref{eq:Dstar-single-letter} is an upper-tail mean of the score distribution.
It is a canonical tail-leverage quantity.

\subsection{Finite-length validation (simulation)}
\label{subsec:finite-length-validation}

To validate that the benchmark characterization remains predictive away from the asymptotic regime, we simulate a finite pool of $K=10^4$ inspected records under a logistic score model consistent with Assumption~\ref{asmp:weak-screening}. Concretely, we generate posterior scores of the form $\eta_i=\sigma(\mathrm{logit}(p_0)+\delta G_i)$, where $\sigma$ is the logistic function, $p_0=0.01$ is the base rate, $\delta=0.1$ controls screening strength, and $G_i$ is standardized. We then apply the top-$B$ verification policy and report the empirical information gain $\Hh(\Theta)-D(K,B)$.

Fig.~\ref{fig:finite-length-validation} compares these empirical gains to three theoretical references: (i) the benchmark prediction from Theorem~\ref{thm:tight-region} via \eqref{eq:Dstar-finite} (solid), (ii) the weak-screening approximation implied by Theorem~\ref{thm:score-achievability} / Corollary~\ref{cor:tight-sqrt} (dashed), and (iii) the converse bound from Theorem~\ref{thm:ver-tradeoff} intersected with the finite-pool ceiling (dash-dot). The dotted curve is a finite-pool oracle that knows which records are informative and can therefore verify only true positives up to the pool limit.